%% file: main.tex
\documentclass{article}

\usepackage{ijcai26}

\usepackage{booktabs}
\usepackage{times}
\usepackage{soul}
\usepackage{url}
\usepackage[hidelinks]{hyperref}
\usepackage[utf8]{inputenc}
\usepackage[small]{caption}
\usepackage{graphicx, subcaption}
\usepackage{amsmath}
\usepackage{amsthm}
\usepackage{natbib}
\usepackage{amssymb}
\usepackage{xcolor}
\usepackage{xspace}

\theoremstyle{plain}
\newtheorem{lemma}{Lemma}[section]

\title{Constraint-Data-Value-Maximization: Utilizing Data Attribution for Effective Data Pruning in Low-Data Environments\thanks{This is the extended arXiv version of the IJCAI 2026 paper. It includes proofs, ablations, and additional implementation details not present in the conference version.}}

\author{%
  Danilo Brajovic$^{1,2}$ \and David A. Kreplin$^{3}$ \And Marco F. Huber$^{1,2}$
  \affiliations
  $^{1}$ Fraunhofer IPA, Stuttgart, Germany \\
  $^{2}$ Institute of Industrial Manufacturing and Engineering IFF, University of Stuttgart, Germany \\
  $^{3}$ Hochschule Heilbronn, Germany
  \emails
  danilo.brajovic@ipa.fraunhofer.de, david.kreplin@hs-heilbronn.de, marco.huber@ieee.org
}

\def\cdvm{CDVM\xspace}

\begin{document}

\maketitle

\begin{abstract}
Attributing model behavior to training data is an evolving research field. 
A common benchmark is data removal, which involves eliminating data instances with either low or high values, then assessing a model's performance trained on the modified dataset.  Many existing studies leverage Shapley-based data values for this task. In this paper, we demonstrate that these data values are not optimally suited for pruning low-value data when only a limited amount of data remains. To address this limitation, we introduce the Constraint-Data-Value-Maximization (CDVM) approach, which effectively utilizes data attributions for pruning in low-data scenarios. By casting pruning as a constrained optimization that both maximizes total influence and penalizes excessive per-test contributions, CDVM delivers robust performance  when only a small fraction of the data is retained. On the OpenDataVal benchmark, CDVM shows strong performance and competitive runtime. 

\end{abstract}

\section{Introduction}
Machine learning models, especially large language models, have an insatiable demand for data, while the availability of data is stagnating. By attributing the influence of training data on model performance, the required amount of data can be reduced, thereby saving energy and improving model quality. Early works in this direction, such as influence functions \citep{Koh2017}, aim to gain insights into model behavior by attributing the influence of individual training instances on test instances, thereby serving as a method for explainable AI. Conversely, methods like data Shapley \citep{Ghorbani2019} have been used to assess the influence of single training instances on model performance, referring to this as data value, and applying this understanding for data removal. Typically, these approaches rely on Shapley-based  approximations to compute the value of each data instance.

\citet{Sorscher2022} benchmark methods for pruning data on ImageNet, showing that novel algorithms for data pruning can improve scaling laws, thus reducing the resource costs associated with modern deep learning.

The motivation for our work is that current data-pruning methods based on
Shapley or Banzhaf values suffer from inherent limitations that prevent them
from fully exploiting pruning potential. For Data Shapley in particular, this is aligned with \citet{Wang2024RethinkingDataShapleyForDataSelection}, who show that it need not outperform random selection unless the utility function satisfies structural assumptions. We first analyze these shortcomings
and then leverage our insights to design a new pruning algorithm. Our method formulates pruning as an optimization over a data attribution matrix and is evaluated on the OpenDataVal benchmark \citep{Jiang2023}, achieving strong performance while remaining computationally competitive against state-of-the-art techniques including those identified by \citet{Sorscher2022}.
Our findings show that there is still substantial room to improve data pruning, which in turn can lower training costs and reduce energy consumption.
Our main contributions are:

\begin{itemize}
  \item  We demonstrate that Shapley and Banzhaf-based attributions allocate smaller marginal contributions to instances in larger data clusters. This imbalance causes large clusters to be pruned too early, producing unbalanced removal patterns and suboptimal pruning performance.
  \item We show that optimal retention sets are budget-specific: the subset that maximizes accuracy for one retention budget need not fully contain the subset optimized for another budget.
  \item Based on these insights, we introduce Constraint-Data-Value-Maximization (CDVM), a novel algorithm that treats data pruning as an optimization problem over a data attribution matrix. 
  \item We benchmark CDVM on six datasets from OpenDataVal, showing strong performance. 
\end{itemize}

\section{Background, Motivation, and Related Work}
\label{sec:background}
We begin with a concise overview of data valuation, then examine pruning and other evaluation benchmarks, outline their limitations, and finally introduce two concepts that motivate our method.

\subsection{Data Valuation and Relationship to Pruning}
Data valuation assesses the overall impact of individual training instances on the model performance, effectively answering the question, "How much did a training instance contribute to the model's performance?"
The value assigned to each training instance \( i \) is represented as a scalar. Consequently, the valuation scores for a dataset are expressed as a vector \( v \in \mathbb{R}^{n} \), where $n$  is the number of training instances.  

\subsubsection{Estimating Data Values}
We now introduce the basic notation and the main estimation methods for data values used in this paper. For a comprehensive survey, see \citet{Hammoudeh2022} and \citet{Sim2022}.

\begin{itemize}
    \item $D=\{(x_i,y_i)\}_{i=1}^n$ is a labeled dataset with inputs $x_i$, labels $y_i$ and $d_i=(x_i,y_i)$.
    \item $f_D$ is the model trained on the dataset $D$. 
    \item $\theta_D$ are the corresponding model parameters.
    \item $f_{D \cup \{d_j\}}$ denotes a model trained on the union of $D$ and the  data instance $d_j=(x_j,y_j)$.
    \item $\mathcal{U}$ represents a utility function, such as accuracy in a classification setting.
\end{itemize}

\paragraph{Leave-One-Out}
The simplest approach to estimating the influence of a training instance is the leave-one-out (LOO) method, which involves excluding a particular data instance during training and comparing the model performance or test predictions with and without this instance. This method can be approximated by influence functions \citep{Koh2017} without the need for re-training. The main limitation is that the effect of omitting a single data instance can often be obscured by the remaining data and the inherent noise in the training process \citep{K2021}. As a result, many data instances may appear to have a negligible value. Empirical evidence also suggests that LOO is not effective for benchmarks in data valuation \citep{Jiang2023}.
Formally, the LOO value of data instance $d_i$ can be expressed as  
$V(d_i) = 
                \mathcal{U}(f_D) - \mathcal{U}(f_{D \setminus \{d_i\}}) 
$.

\paragraph{Semi-value-based Estimates}
\label{sec:shap_introduction}
Semi-value-based techniques quantify the importance of a training instance \(d_i\) by its marginal contribution over all subsets 
\(S \subseteq D \setminus \{d_i\}\). For data valuation, three variations were proposed; original Shapley value \citep{Ghorbani2019}, Banzhaf \citep{Wang2022}, and Beta Shapley \citep{Kwon2021}. Technically, all these methods differ only by the weighting of each subset $w(S)$ and can be expressed as
$
V(d_i) = 
    \sum _{S \subseteq D \setminus \{d_i\}}
        w(S)
            \left[ 
                \mathcal{U}(f_S) - \mathcal{U}(f_{S \cup \{d_i\}}) 
            \right]. 
$

These methods generally outperform the LOO estimate in practice. However, their main drawback is their exponential computational complexity. To mitigate this, Monte Carlo or other sampling-based techniques are often used to approximate data values. 
Notably, data Banzhaf \citep{Wang2022} has proved to be computationally efficient due to the \emph{Maximum Sample Reuse (MSR)} principle. 
The data value is approximated by sampling subsets \( S \subset D \) of the training data with probability $p$ and training a model on each subset. This process is repeated multiple times, and the data value of data instance $d_i$ is computed as the performance difference between subsets where $d_i$ is included versus where it is not. 

\paragraph{Out-of-Bag and Memorization Estimates}
The concept of memorization has been introduced in recent studies, wherein a training instance \(i\) is considered "rare" if its exclusion from the training set significantly reduces the probability that \(i\) is correctly classified by the same model \citep{Feldman2020MemorizationLongTail, Paul}. A related method used in data valuation is the out-of-bag estimate, known as \emph{DataOob}, where the significance of training instances is assessed using out-of-bag samples \citep{Kwon2023DataOob}. In each iteration, the training set is split into in-bag and out-of-bag groups, a model is trained on the in-bag samples, and predictions are made on the out-of-bag samples. The value of a data instance is then determined based on its memorization score during these out-of-bag assessments. 
Although these techniques are not suited for data attribution (as no test set is involved), they have proven effective in data pruning tasks, even on ImageNet \citep{Sorscher2022}.

\subsubsection{Data Pruning and other Benchmarks for Data Valuation}
Data-valuation methods are commonly evaluated on three tasks:

\begin{enumerate}
  \item \textbf{Noise Detection:} Identify and remove corrupted or mislabeled examples, which tend to carry large negative value due to their disruptive effect on training \citep{Jiang2023}.
  \item \textbf{Domain Transfer:} Select a subset of source--domain data that maximizes accuracy on a target--domain test set (e.g.,\ choosing MNIST digits to improve performance on street-number datasets) \citep{Ghorbani2019}.
  \item \textbf{Data Removal:} Measure how model accuracy changes when portions of the training set are removed in order of increasing or decreasing value. Removing high-value instances first should cause a steep accuracy drop, whereas pruning low-value instances should have minimal impact.
\end{enumerate}

In this work, we focus on data removal, since it directly addresses the practical goal of reducing dataset size without sacrificing performance. This is particularly relevant when models must be trained repeatedly under compute or storage constraints.
From here on, we use \emph{data pruning} to mean the removal of low-value points.
In the literature, authors differ in whether they report results for removing low-value data (pruning) or for removing high-value data first. For instance, the original Data Shapley study \citep{Ghorbani2019} presents low-value pruning curves, while the OpenDataVal framework \citep{Jiang2023} emphasizes high-value removal. We are not aware of any formal discussion explaining this discrepancy. Empirically, memorization-based or out-of-bag-based methods tend to excel at low-value pruning, whereas Shapley-based techniques often show stronger effects when high-value data is removed first.

\subsubsection{Limitations of Data Values for Data Pruning}
\label{sec:limitations}

\begin{figure*}[t]
    \centering
    \includegraphics[width=\linewidth]{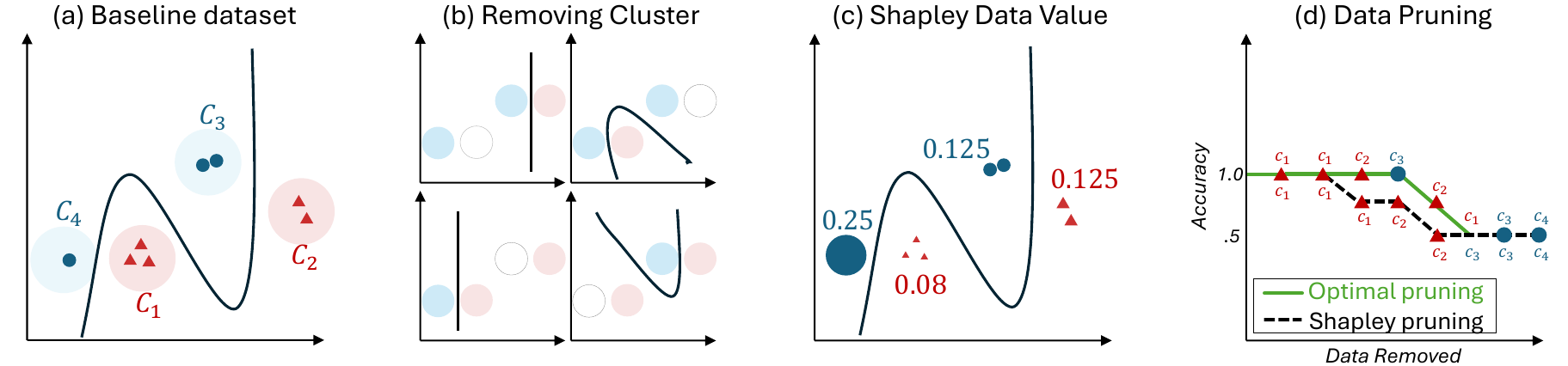}

    \caption{(a) Baseline synthetic dataset comprising 8 points from 4 clusters. 
    (b) Illustrates the changed decision boundary after removing an entire cluster. In each scenario, the decision boundary undergoes significant alterations. 
    (c) Displays the Shapley data value.
    (d) Test accuracy as we iteratively remove instances (x-axis: removal step 1--8; y-axis: accuracy).  
    In the optimal (green) removal order, clusters are pruned as 
    $c_1,c_1,c_2,c_3,c_2,c_1,c_3,c_4$, 
    whereas the Shapley-based (black) order is 
    $c_1,c_1,c_1,c_2,c_2,c_1,c_3,c_4$ and $c_i$ belongs to the $i$-th cluster.}
    \label{fig:synthetic_data}
\end{figure*}

After reviewing the main approaches to data valuation, we now highlight
their shortcomings in the context of data pruning, using a simple clustered
setting that we also analyze theoretically in
Appendix~\ref{sec:app_cluster_bias}.
Consider the dataset in Figure~\ref{fig:synthetic_data}(a): the training set
is partitioned into four Gaussian clusters
\[
  D \;=\; C_1 \cup C_2 \cup C_3 \cup C_4,
  \qquad C_r \cap C_s = \emptyset \;\text{for } r \neq s,
\]
where points in cluster \(C_k\) are sampled from a Gaussian distribution
\(\mathcal{N}(\mu_k, \sigma^2 I)\). Let \(n_k = |C_k|\) denote the size of
cluster \(C_k\); in this example, \(n_1 = 3\), \(n_2 = 2\), \(n_3 = 2\), and
\(n_4 = 1\). The black line shows the decision boundary learned by a
multi-layer perceptron trained on the full dataset.

The test set is partitioned analogously as
\[
\mathcal{T} = \mathcal{T}_1 \cup \mathcal{T}_2 \cup \mathcal{T}_3 \cup \mathcal{T}_4,
\quad |\mathcal{T}_k| = m_k
\]
so that \(\mathcal{T}_k\) contains the test points associated with cluster \(C_k\) (these
test points are not shown in the figure).

In this dataset, removing an entire training cluster \(C_k\) causes all
corresponding points in the test cluster \(T_k\) to be misclassified
(Figure~\ref{fig:synthetic_data}(b)). Hence each cluster \(C_k\) contributes a
utility \(u_k > 0\) if and only if at least one of its points is present in
the training subset.
Appendix~\ref{appendix:synthetic_data} provides further
details on this construction.

\paragraph{1. LOO has Redundancy Bias and Attributes Non-zero Value Only to Unique Data}
We begin with the observation that LOO attributions reward only non-redundant samples. In Figure \ref{fig:synthetic_data} (a), only the singleton instance in \(C_4\) receives a nonzero value $u_4>0$, since its removal introduces a change in the decision boundary (Figure \ref{fig:synthetic_data} (b)), causing a test error at the respective cluster center. All other instances receive a value of zero due to their redundancy and can therefore be pruned in any order.

\paragraph{2. Shapley and Banzhaf Exhibit Cluster-Size Bias and Cause Imbalanced Pruning}
In this model, Data Shapley and Data Banzhaf values share a cluster's total
utility among all its members: a point in $C_k$ receives Shapley value
$u_k / n_k$ (Figure \ref{fig:synthetic_data} (c), Appendix~\ref{sec:app_bias_shapley}) and Banzhaf value
$u_k / 2^{n_k-1}$ (Appendix~\ref{sec:app_bias_banzhaf}).

If all clusters share the same utility $u_k = u$ (e.g., when test clusters
$T_k$ are of equal size or when performance is measured via balanced accuracy
over clusters), then both Data Shapley and Data Banzhaf assign smaller values
to points in larger clusters (Appendix~\ref{subsec:app_interpretation_and_equal_train_test}),
so these points are pruned first. On the synthetic dataset, this leads to
initially harmless removal of redundant points from majority clusters, but
once the last representative of a cluster is removed, performance drops
sharply, in contrast to the optimal
strategy that keeps one representative per cluster for as long as possible (Figure~\ref{fig:synthetic_data}(d)).
Moreover, in this clustered model we can show that, when train and test sets are
equally distributed over clusters (i.e., the test set contains $m_k$ instances
from cluster $C_k$ with $m_k \propto n_k$), Shapley data values collapse to a
constant: every training instance receives the same Shapley value, regardless
of its cluster membership; see
Appendix~\ref{subsec:app_interpretation_and_equal_train_test} for the formal
derivation.

This effect can also be observed on real data. In the left plot of Figure \ref{fig:empirical_examples}, we compare DataBanzhaf against random pruning on CIFAR-10, using either 1,000 or 10,000 models to estimate data values. Both DataBanzhaf variants outperform random removal up to about 50\% pruning. Beyond that point, the 10,000-model variant plunges significantly below the random baseline, whereas the 1,000-model variant continues to slightly outperform random pruning, even though the larger ensemble should yield more accurate attributions. 

\paragraph{3. Pruning Subsets are Budget Specific}

Finally, we observe that optimal retention sets at different pruning levels are not nested: the subset that maximizes accuracy for one budget \(s\) may exclude instances that are essential for another budget \(s'\neq s\). In Figure~\ref{fig:empirical_examples} (center), we use our own method to identify the best subsets for retaining 5\%, 10\% and 15\% of the data.  We then perform sequential pruning of the remaining instances, always keeping the preselected subset intact and plot test accuracy versus fraction removed.  Each accuracy curve peaks exactly at its target retention level (dots), and even a slight deviation from that budget causes a dramatic collapse in performance. 
A similar pattern appears for Memorization/DataOob (Figure \ref{fig:empirical_examples}, right): removing the highest-value instances first (solid blue curve) initially improves performance before it plummets, whereas retaining those same instances until the very end yields almost state-of-the-art final accuracy. This mirrors the finding of \citet{Sorscher2022}, namely that the examples most dispensable in data-rich regimes are precisely those that must be kept when data become scarce.

In summary, these observations highlight the need for a pruning strategy that  
    (i) tracks influence at the level of individual test samples and  
    (ii) can flexibly re-optimize for each pruning budget.  
  To that end, we now review two key building blocks of our approach:  
  data attribution and influence-function--guided pruning.

\begin{figure*}[t]
    \centering    
    \includegraphics[width=.32\linewidth]{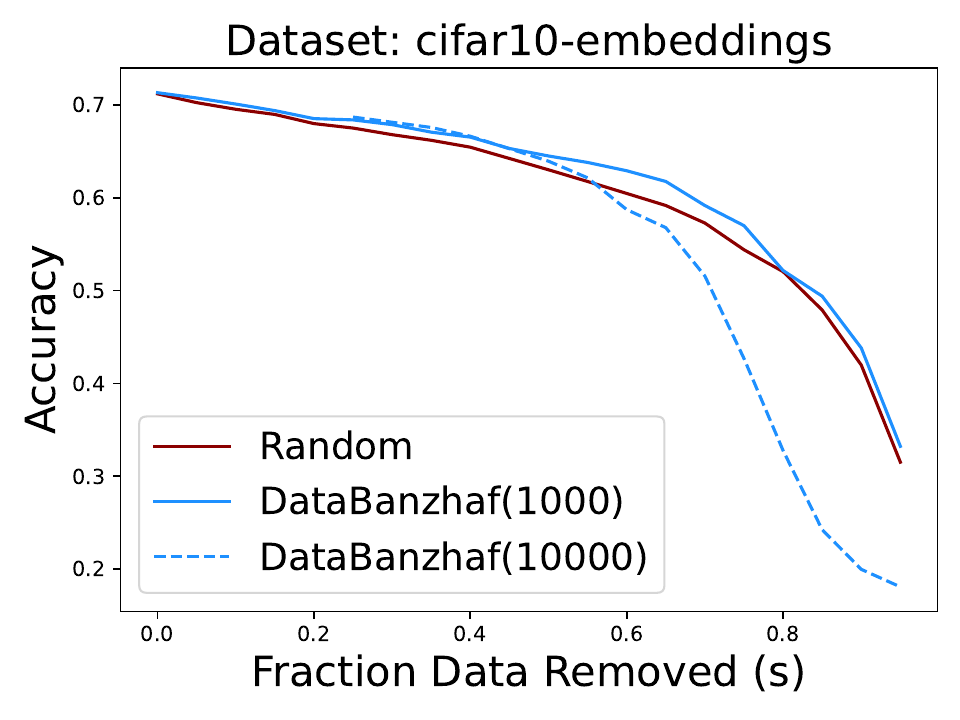}
    \includegraphics[width=0.32\linewidth]{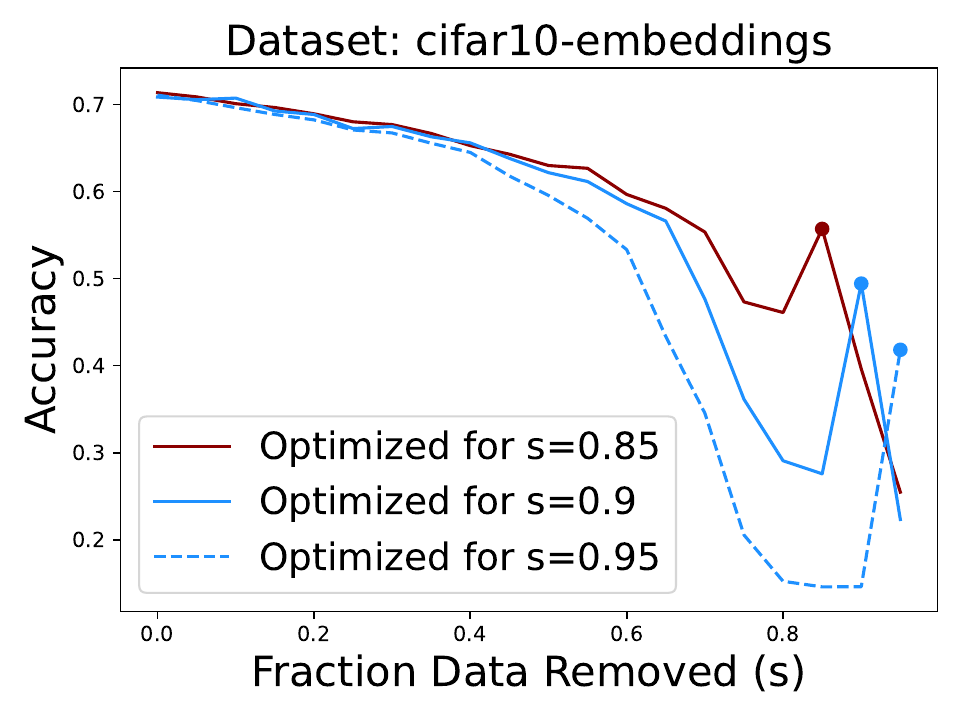}
    \includegraphics[width=.32\linewidth]{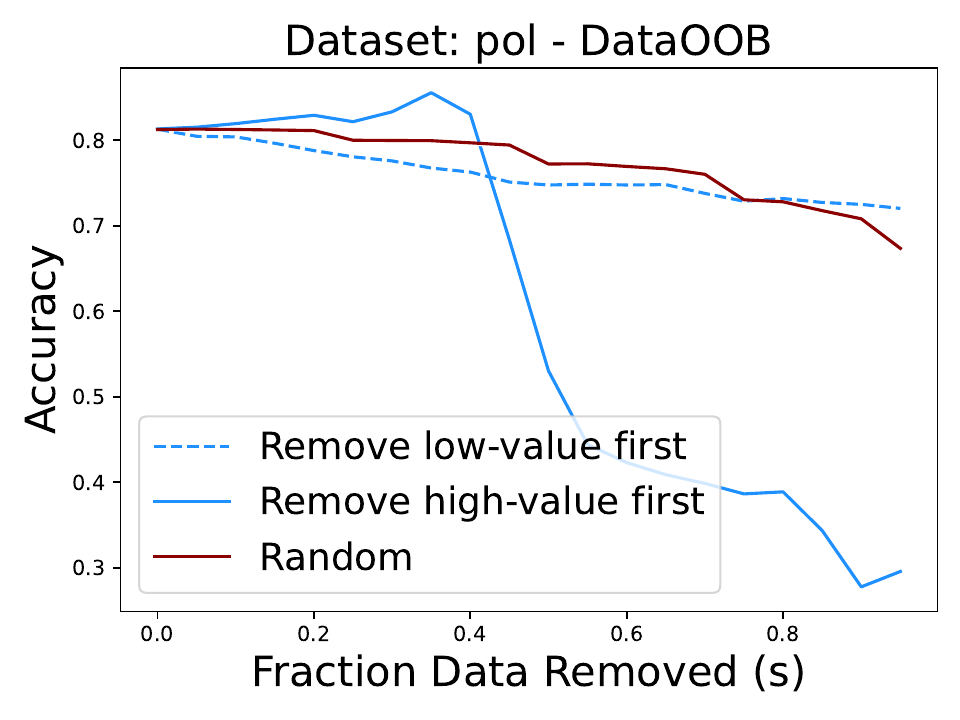}

    \caption{
    All plots show test accuracy as a function of the fraction of training data removed.
    \textbf{Left:} CIFAR-10 results for DataBanzhaf pruning. We estimate Banzhaf values with 1,000 models (solid blue) and 10,000 models (dashed blue), and compare against random removal (solid red). Both Banzhaf variants outperform random up to ~50\% pruning, but counterintuitively the 10,000-model variant degrades faster than the 1,000-model version.
    \textbf{Center:} An extreme example of non-nested pruning subsets. Each curve is optimized for exactly 85\%, 90\%, or 95\% removal (i.e., 15\%, 10\%, 5\% retention). Accuracy peaks precisely at the target rate (dots), and removing more or less data causes a steep collapse.
    \textbf{Right:} Memorization/DataOob pruning. The dashed blue curve removes lowest-value instances first; the solid blue curve removes highest-value instances first; and the red curve is random removal. Surprisingly, the very instances whose early removal boosts accuracy (and outperforms random) when data is abundant, must be retained until the end under high removal budgets to again outperform the random baseline.
    }
    
    \label{fig:empirical_examples}
    
\end{figure*}

\subsection{Other Related Work}
Core-set methods select small subsets that preserve the geometry or distribution
of the full dataset \citep{Feldman2020CoreSets, Jiang2025LearningComplexityDownstreamPruning}.
Recent work combines importance scores with redundancy penalties in discrete
optimization formulations \citep{Tan2025PruningByInfoMax}, conceptually related
to our CDVM objective. In parallel, Most Influential Subset Selection (MISS)
studies subset-level influence and its failure modes \citep{HU2025_MISS}, and
several pruning methods leverage training dynamics such as example difficulty
or uncertainty \citep{Cho2025_LigweightPruningWithDifficultyAndUncertainty,
He2024_PrunungWithUncertainty}. We view CDVM as complementary to these lines,
focusing on attribution-based pruning in low-data regimes.

\subsection{Preliminaries: Data Attribution \& Influence-Function Pruning}
\label{sec:preliminaries}
We now introduce two core concepts underlying our method: data attribution and influence-function--guided pruning.

\subsubsection{Data Attribution}
Data attribution is conceptually related to data valuation, but traces the influence of individual train instances down to specific test samples. 
The influence of training data on test predictions is quantified using the attribution matrix \( \mathbf{T} \in \mathbb{R}^{n \times m} \), where \( n \) is the number of train instances and \( m \)  the number of test instances. A high value of \( \mathbf{T}_{i,j} \) indicates that the train instance \( i \) significantly impacts the prediction for test instance \( j \). The connection between both is that data values  can be estimated by averaging over the columns (test instances) of \( \mathbf{T}\), formulated as \( v_i = \frac{1}{m} \sum_{k=1}^m \mathbf{T}_{i,k} \). This per--test--sample breakdown provides a fine-grained view of dataset contributions, which we leverage directly in our method.
Several methodologies have been developed to estimate this influence, with influence functions being one of the pioneering approaches \citep{Koh2017}. TRAK has emerged as a scalable method for data attribution across large datasets  \citep{Park2023}. 

\subsubsection{Influence-Function--Guided Pruning}
 \citet{Yang2022_DataSetPruningInfluenceFunctions} cast data pruning as a discrete optimization problem over binary removal variables, with the goal of minimizing overall parameter change. Let $r \in \{0,1\}^n$
be the indicator vector specifying which of the \(n\) training samples are removed. The parameter change from removing a single instance \(d_i\) is given by the influence function
\(
\mathcal{I}(d_i)
= \theta_{D\setminus d_i} - \theta_D
\approx \frac{1}{n}\,H_{\theta}^{-1}\,\nabla_{\theta}\mathcal{L}(d_i; \theta_D)\,,
\)
where \(H_{\theta}\) is the Hessian of the total training loss at \(\theta_D\). For a subset of removed instances, these influences simply add up.  Define the matrix
\(
\mathbf{Z} = \bigl[\mathcal{I}(d_1),\,\dots,\,\mathcal{I}(d_n)\bigr],
\)
so that the total parameter change caused by the selected removals is \(\mathbf{Z}\,r\). They then solve
\[
\min_{r\in\{0,1\}^n}\;\bigl\|\mathbf{Z}\,r\bigr\|_2
\quad\text{s.t.}\quad
\sum_{i=1}^n r_i = B\,,
\]
where \(B\) denotes the removal budget. Although this method achieves strong empirical performance and inspired our approach, it has two major drawbacks. First, it requires (approximate) Hessian inversion for every training instance, which is computationally expensive. 
Second, because it relies on influence functions, essentially approximating leave-one-out, it inherits LOO's limitations (see Sec.~\ref{sec:limitations}).

\section{Size-Constraint Data-Value-Maximization}
\label{sec:cdvm}
Building on the limitations identified in Section~\ref{sec:limitations}, we now introduce a method that derives data values specifically optimized for pruning. There, we observed that Shapley-based data values can fail because they tend to remove entire clusters first. To overcome this, we leverage the attribution matrix
\[
\mathbf{T} \in \mathbb{R}^{n \times m},
\]
which describes the influence of each of the \(n\) training samples on each of the \(m\) test samples. A naive way to derive pruning scores from \(\mathbf{T}\) is to average over its columns, but this approach suffers (among other issues) from the cluster-removal limitation noted above. Instead, \(\mathbf{T}\) provides fine-grained, per-test influence values that do not suffer from redundancy bias. We leverage this to ensure balanced coverage: at each pruning step, no test sample (and thus no implicit cluster) should have zero total influence. To formalize this, let
\[
w \in \{0,1\}^n
\]
be the binary indicator vector selecting exactly \(S\) out of the \(n\) training instances to retain. The induced utility vector for the \(m\) test samples is
\[
v = \mathbf{T}^\top w \;\in\;\mathbb{R}^m ~,\quad
v_j = \sum_{i=1}^n \mathbf{T}_{i j} \, w_i ~.
\]
A naive pruning objective would be
\[
\max_{w}\;\sum_{j=1}^m v_j
\quad\text{s.t.}\quad
\sum_{i=1}^n w_i = S,\quad
w_i\in\{0,1\}.
\]
This objective maximizes total influence but can still concentrate all value on a few test instances. To ensure balanced coverage, we introduce nonnegative slack variables \(t_j\) that caps any excess above a threshold \(\kappa\). In other words, any amount \(\max\{v_j - \kappa,\,0\}\) is transferred into \(t_j\) and subtracted from the objective. We call the resulting formulation Constraint Data-Value Maximization (CDVM):

\[
\begin{aligned}
    \max_{w, t}\quad &\alpha \sum_{j=1}^m v_j \;-\;(1-\alpha)\sum_{j=1}^m t_j~,\\
    \text{s.t.}\quad
    &v = \mathbf{T}^\top w~,\\
    &\sum_{i=1}^n w_i = S~,\\
    &t_j \ge 0~,\quad j=1,\dots,m~,\\
    &t_j \ge v_j - \kappa ~,\quad j=1,\dots,m~,\\
    &w_i \in \{0,1\}~,\quad i=1,\dots,n ~.
\end{aligned}
\]

This formulation directly remedies the shortcomings identified in Section~\ref{sec:limitations} by 
(1) maximizing each test sample's total influence via \(\mathbf{T}\), thereby avoiding redundancy bias; 
(2) penalizing any excess above \(\kappa\), thereby discouraging concentration of influence on a small number of test samples; 
and (3) enforcing a fixed subset size \(S\) to identify the optimal subset for the given budget.  
Furthermore, because all constraints are linear and some decision variables are integer-valued, the problem can be formulated as a mixed-integer linear program.

In the clustered setting from before, the analysis in Appendix~\ref{sec:app_cdvm_cluster} shows that, under the corresponding parameter setting, CDVM preserves at least one representative per cluster whenever \(S \ge K\).

\subsection{Implementation Details}
In our final setup, we relax \(w_i \in \{0,1\}\) to \(w_i \in [0,1]\) and retain the top-\(S\) entries of \(w\) after solving the linear program.
This improves tractability without observable loss; about 10\% of entries of \(w\) are fractional overall (roughly 6--23\% by dataset), with smaller fractions at smaller retention sizes.
The algorithm takes as input the attribution matrix \(\mathbf{T}\) and two hyperparameters:
\begin{itemize}
  \item \(\alpha \in [0,1]\): trade-off between total utility and penalty for exceeding \(\kappa\),
  \item \(\kappa\): soft upper bound on the influence per test sample.
\end{itemize}

Computing \(\mathbf{T}\) is the main computational bottleneck, since it requires retraining models on sampled subsets, an expense shared by all semi-value-based methods. Consequently, any parameter used to estimate $\mathbf{T}$ effectively becomes a hyperparameter. Here, we follow the Maximum Sample Reuse (MSR) principle of \citet{Wang2022}:  
\begin{enumerate}
  \item Sample \(T\) subsets \(S_t \subseteq D\) by including each training instance with probability \(p\).
  \item Train a model on each \(S_t\) and record the performance (or indicator of correct classification) on each validation instance.
  \item Estimate \(\mathbf{T}_{ij}\) as the average difference in that performance for validation instance \(j\) when \(d_i\) is in versus out of \(S_t\).
\end{enumerate}
The entries \(\mathbf{T}_{ij}\in[-1,1]\) are easily interpretable:  
\(-1\) means ``always causes a mistake'' and  
\(+1\) means ``always ensures correct prediction.'' Moreover, $\mathbf{T}$ is sparse, most training instances have zero or negligible influence on most validation instances, which significantly accelerates subsequent optimization.

Once \(\mathbf{T}\) is computed,  we solve the relaxed CDVM problem using the Disciplined Parametrized Programming framework.  This formulation enables caching, so we can quickly resolve the program after it has been solved once. 
This efficiency allows a lightweight grid search over the two hyperparameters \(\alpha\) and \(\kappa\). We then run the optimization independently for each retained fraction (e.g.,\ 30\%, 25\%).

\section{Experimental Results}
\label{sec:experiments}

\begin{figure*}[t]
    \centering
    \includegraphics[width=0.32\linewidth]{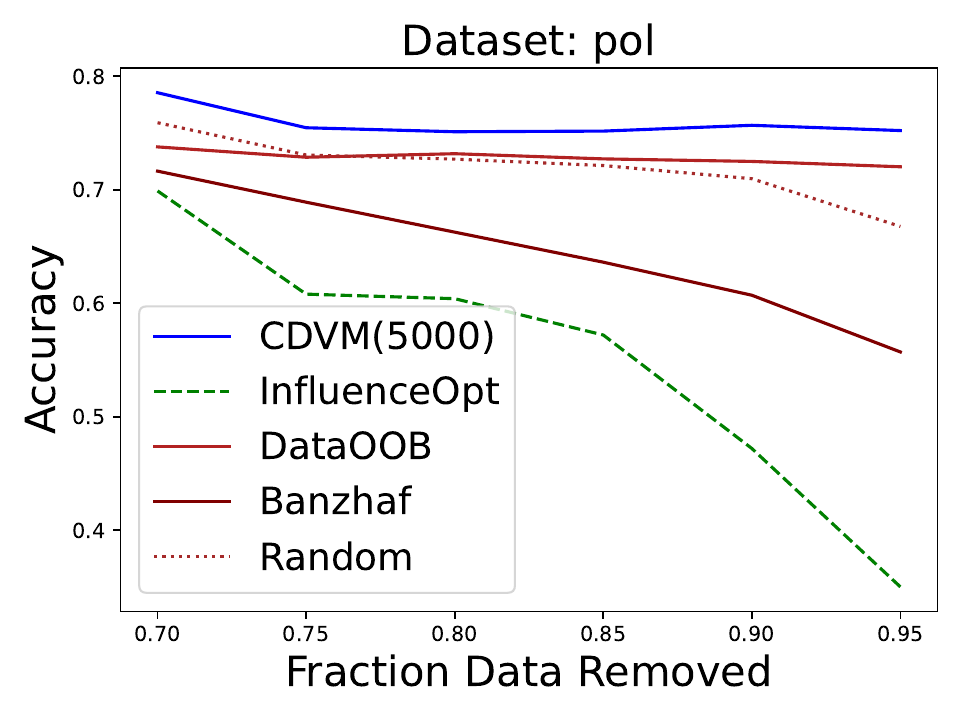}
    \includegraphics[width=0.32\linewidth]{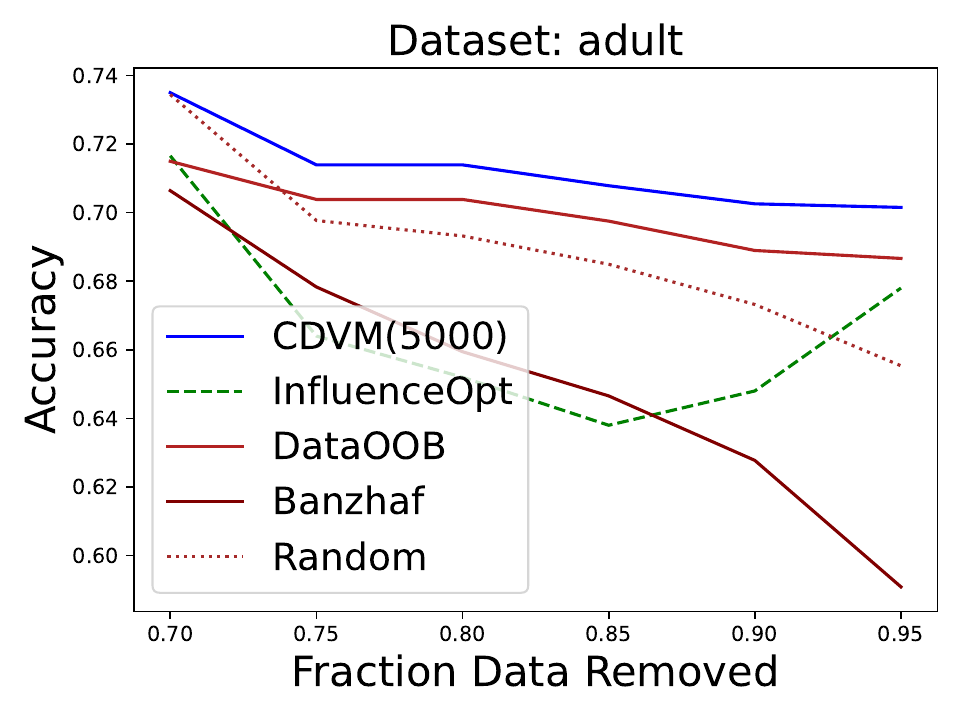}
    \includegraphics[width=0.32\linewidth]{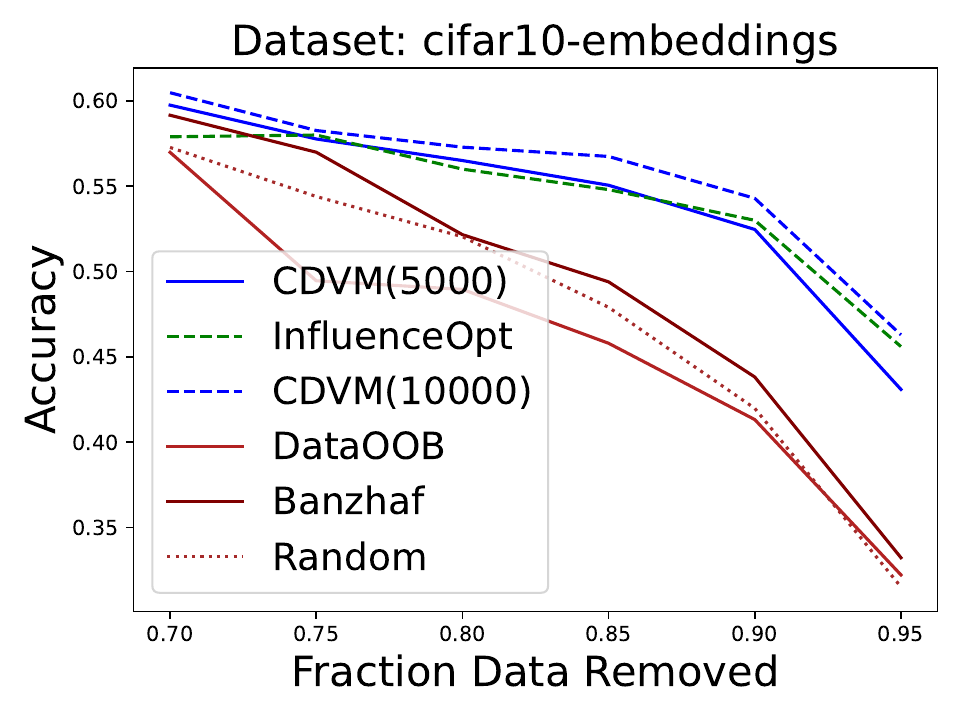}
    
    \includegraphics[width=0.32\linewidth]{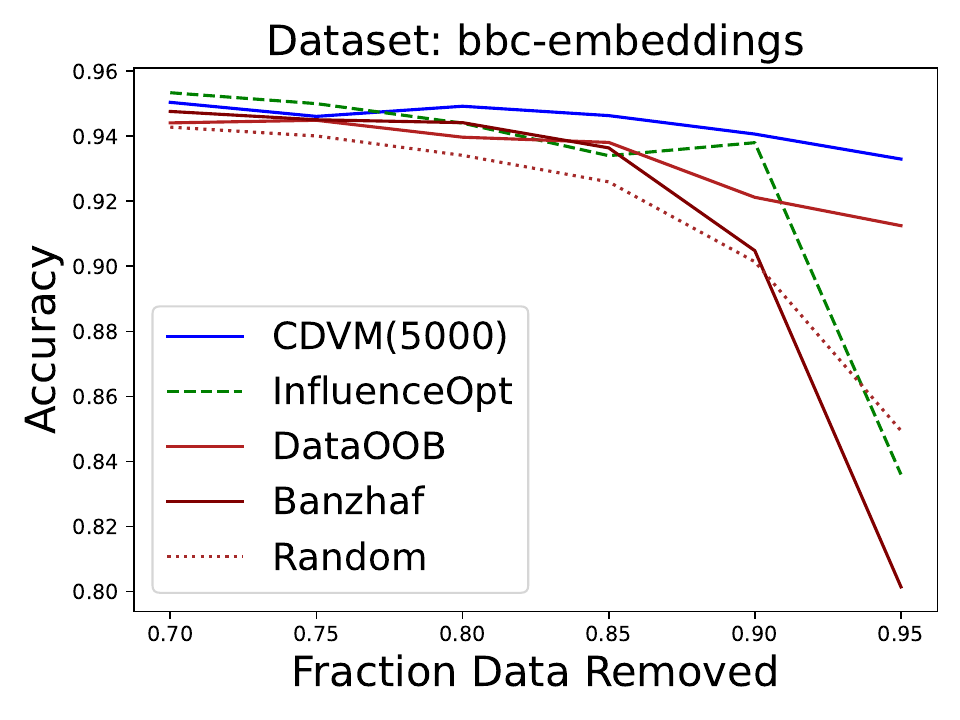}
    \includegraphics[width=0.32\linewidth]{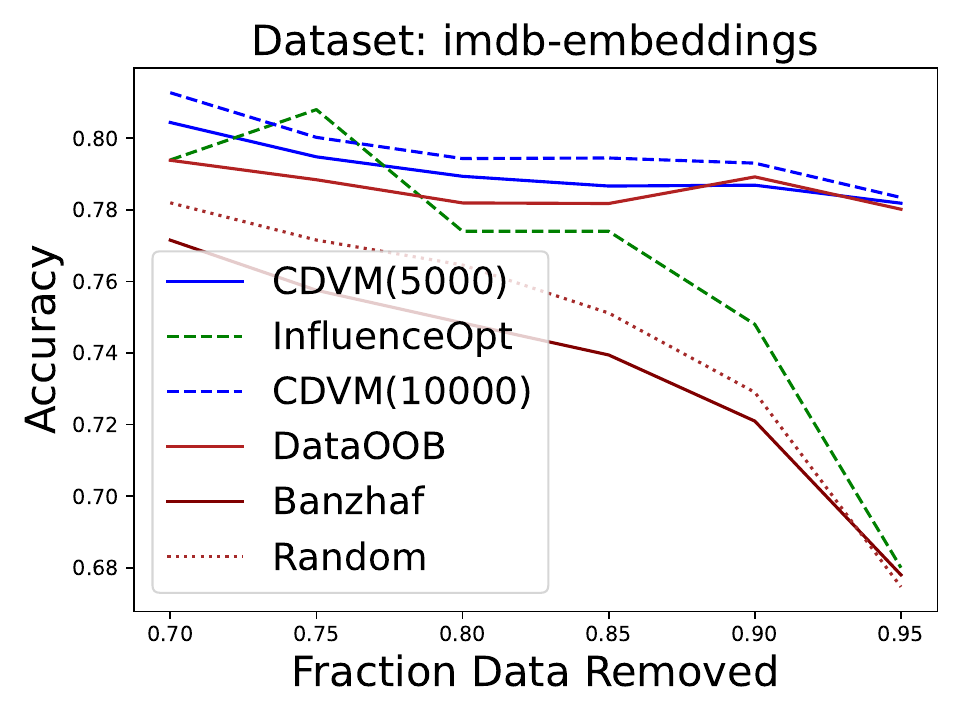}
    \includegraphics[width=0.32\linewidth]{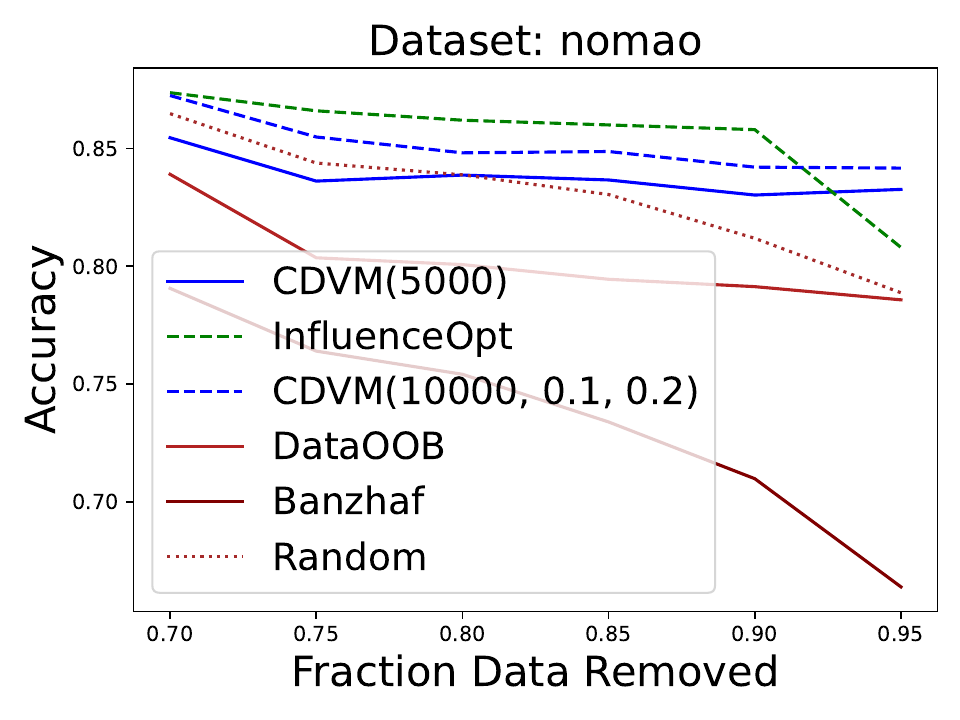}
    
    \caption{
    Accuracy on 30\%, 25\%, 20\%, 15\%, 10\%, and 5\% of remaining training data for six datasets in the OpenDataVal benchmark \citep{Jiang2023}. We utilized a sampling probability of \( p=0.03 \) for computing the attribution matrix and automatically optimized parameters for the \cdvm method. Out of 36 configurations, \cdvm achieved state-of-the-art performance in 28 setups. 
    }
    \label{fig:main_results}
\end{figure*}

We evaluate CDVM on the six datasets from the OpenDataVal benchmark \citep{Jiang2023}.  
The attribution matrix $\mathbf{T}$ is computed on the training--validation split and used to prune training instances. Final performance is then assessed on the held-out test set. 
Each experiment is run with 25 seeds, and we report the average test accuracy at retention levels from 5\% to 30\% in steps of 5\%.
We focus on low retention levels because differences are small at high retention and become pronounced only once important instances start to be pruned.
We compare against the following baselines:
\begin{itemize}
  \item \textbf{Random} removal of training samples.
  \item \textbf{DataOob/memorization} showing strong performance in \citet{Kwon2023DataOob, Sorscher2022}.
  \item \textbf{DataBanzhaf} \citep{Wang2022}, a semi-value--based method grounded in the MSR principle, which we also employ.
  \item \textbf{Influence Optimization} (\citet{Yang2022_DataSetPruningInfluenceFunctions}).  We encountered some stability issues with the original code: e.g.\ optimizing for a 10\% final subset occasionally performed better when using the budget for 5\%.  To avoid underestimating this baseline, at each pruning level we compare against the best accuracy achieved by this method over any budget. We also relaxed the constraint \(r_i\in\{0,1\}\) to a continuous one \(r_i\in[0,1]\), since the original mixed-integer-programming formulation often failed to converge. Consequently, these results should be viewed as an upper bound on the method's performance.
\end{itemize}

DataBanzhaf serves as a baseline to ensure any performance gains stem from our optimization rather than the attribution algorithm.
For CDVM, we fix the sampling probability at \(p=0.03\) and train \(T=5,000\) models (the primary computational bottleneck).  In some cases, tuning \(p\) or increasing \(T\) manually yields gains; we also report those as dashed line when they are significant.

Figure~\ref{fig:main_results} summarizes results over the 36 evaluation instances (6 datasets $\times$ 6 pruning rates), our default CDVM configuration (solid blue) outperforms baselines in 24 cases. Per-dataset tuning (dashed blue) yields some gains and increases the total number of state-of-the-art results to 28. Appendix \ref{appendix:results} provides the full tabular breakdown.

Among all baselines, only \citet{Yang2022_DataSetPruningInfluenceFunctions} (green dashed line) outperforms CDVM, mainly on the \texttt{nomao} dataset, where it edges out CDVM at 5 of the 6 pruning levels. It also performs competitively on \texttt{cifar10} but falls short on \texttt{pol} and \texttt{adult}, despite being evaluated as an upper bound. On the two text datasets (\texttt{bbc} and \texttt{imdb}), its gains are occasional and smaller. In contrast, CDVM is the only method that consistently beats random across all six benchmarks, provided hyperparameters are appropriately tuned (see nomao discussion below). DataOob/memorization remains competitive on \texttt{imdb} and \texttt{bbc} datasets, but never achieves state-of-the-art. 

The \texttt{nomao} dataset exhibits unusual dynamics for CDVM and \citet{Yang2022_DataSetPruningInfluenceFunctions}'s methods. With default hyperparameters, CDVM initially underperforms random pruning up to an 85\% removal rate. We found that manually tuning to \(p=0.1\), \(\kappa=0.2\), \(\alpha=0.1\) restores its advantage. Likewise, \citet{Yang2022_DataSetPruningInfluenceFunctions}'s approach attains its best scores on \texttt{nomao} only when its ranked instances are removed first and the remainder are kept at random, i.e., by applying its ranking in reverse.
We attribute CDVM's initial failure mode on this dataset to the high proportion of near-zero entries in the attribution matrix \(\mathbf{T}\). Increasing the sampling probability \(p\) seems to improve this, and using a smaller threshold \(\kappa\) prevents CDVM from assigning too much influence to individual validation instances when overall attributions are small.

\subsection{Ablation Study}
\label{sec:ablation}

\begin{figure*}[!ht]
    \centering
    \includegraphics[width=0.24\linewidth]{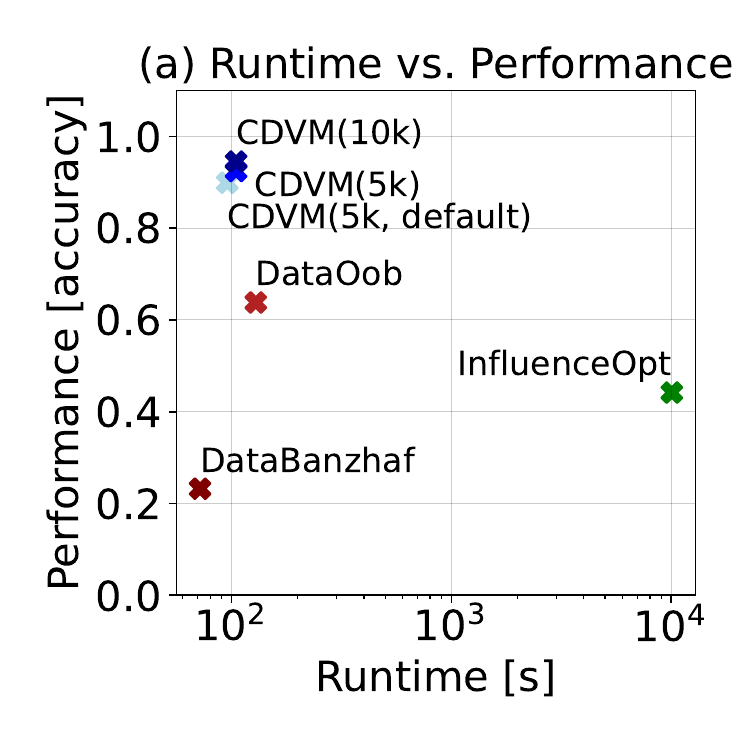}
    \includegraphics[width=0.24\linewidth]{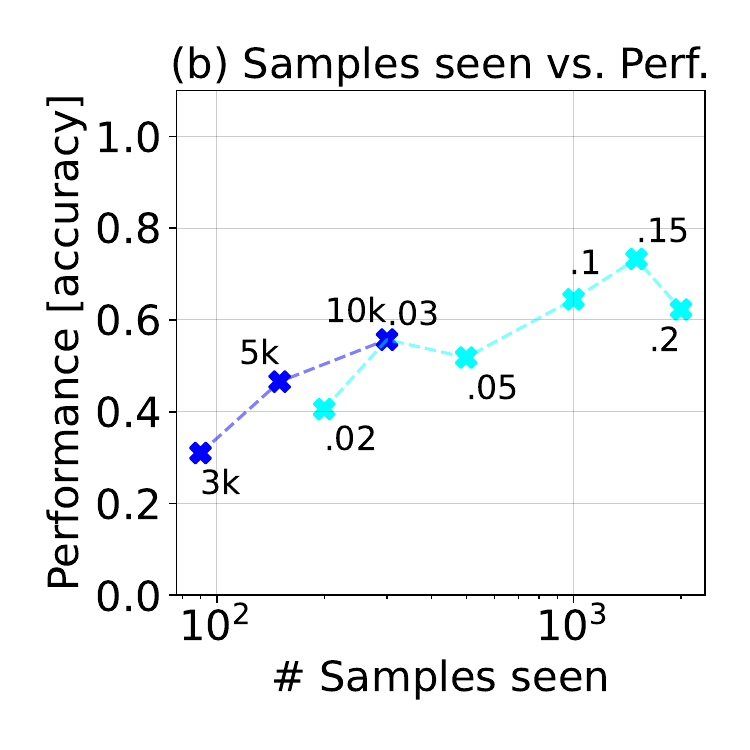}
    \includegraphics[width=0.24\linewidth]{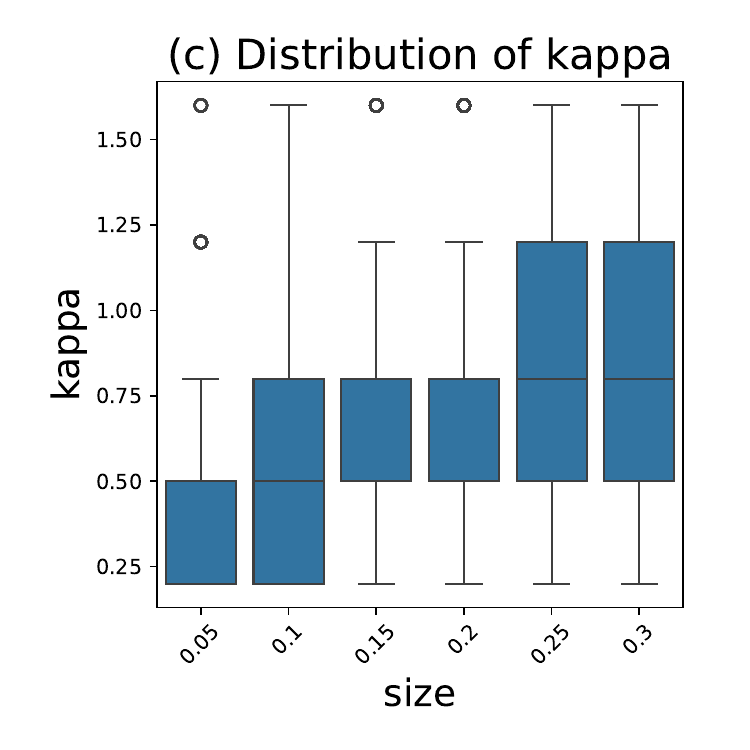}
    \includegraphics[width=0.24\linewidth]{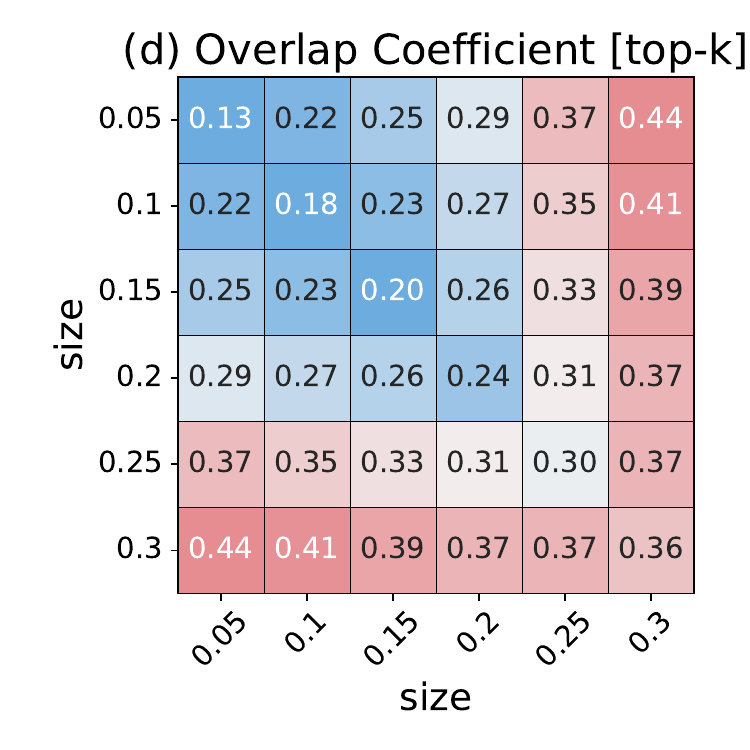}
    \caption{ 
    \textbf{(a)} Runtime vs.\ normalized performance for all benchmarked methods, aggregated over six datasets and six pruning levels.  
    \textbf{(b)} CDVM performance as a function of how often each sample is seen during training for sampling probability \(p\) {\color{cyan}(cyan)} and number of models trained  \(T\) {\color{blue}(blue)}.  
    \textbf{(c)} Distribution of the selected slack threshold \(\kappa\) across datasets and retention fractions.  
    \textbf{(d)} Average overlap coefficient $\frac{|A \cap B|}{\min(|A|, |B|)}$ between retained sets at different retention levels, aggregated across datasets and seed pairs. Diagonal values increase with retention size, reflecting greater redundancy at smaller budgets; off-diagonal values exceed diagonal values, indicating that larger retained subsets largely contain the smaller ones.
    }
    \label{fig:ablation}
\end{figure*}

\paragraph{Runtime Comparison}
Figure~\ref{fig:ablation}(a) plots each method's average runtime against its
normalized performance across all 36 settings, where 1 denotes the highest
and 0 the lowest accuracy across all methods and budgets (details in
Appendix~\ref{app:ablation_study}). CDVM achieves the best speed--accuracy
trade-off by a wide margin; DataOob outperforms Influence Optimization in
efficiency by delivering consistently strong accuracy at lower computational
cost. Figure~\ref{fig:ablation}(b) further shows that performance improves as
the number of times each sample is observed ($p \cdot T$) increases, but
increasing~$T$ at fixed~$p$ yields more stable gains than enlarging subset
sizes at fixed~$T$.

\paragraph{Hyperparameter Sensitivity}
Although CDVM introduces two hyperparameters (\(\alpha\) and \(\kappa\)), we find them robust across tasks and can be set without a grid search. We recommend
\[
  \alpha = 0.5,
  \quad
  \kappa = \max_{i,j}\mathbf{T}_{ij} \;+\; S\;\mathrm{mean}_{i,j}(\mathbf{T}_{ij}),
\]
which adapts \(\kappa\) automatically to the dataset and the retention budget \(S\).
The intuition behind this choice is that the effective budget per validation instance should scale with the size of the retained subset: on average, a training example contributes \(|S|\) times the mean attribution value. To ensure that highly influential instances are not overly penalized, we additionally add the maximum attribution value.
Figure~\ref{fig:ablation}(c) confirms this empirically: the $\kappa$ values
selected by grid search increase consistently with the retention size~$S$. 

In Figure~\ref{fig:ablation}(a) we compare CDVM(5k) with grid-searched hyperparameters against CDVM(5k, default) using the settings above. The performance difference across all datasets and budgets is marginal ($\le$ 0.05 in normalized performance) while the default configuration incurs zero search overhead, making it a practical choice for most applications.

\paragraph{Nestedness}
To better understand nestedness, we compute the overlap coefficient between retained sets. Figure~\ref{fig:ablation}(d) shows that diagonal values increase with retention size, indicating greater redundancy at smaller budgets. Off the diagonal, larger retained subsets largely contain the smaller ones, showing some nestedness across budgets. This suggests a three-fold structure: instances never selected at any budget, a pool of generally useful instances shared across budgets, and budget-specific instances whose value depends on the retention level. Further analysis is provided in Appendix~\ref{app:nestedness}.

\section{Scalability and Computational Cost}

By default, each dataset in OpenDataVal is subsampled into smaller splits.   
To demonstrate that CDVM extends beyond these reduced settings, we include in Appendix~\ref{app:scaling} a scaling experiment on the Fashion-MNIST dataset (60{,}000 train, 10{,}000 test).  
Thanks to the sparsity of the attribution matrix \(\mathbf{T}\), retaining only 5--10\% of its entries yields solve times of 10--30 minutes, including hyperparameter grid search, whereas retraining models to estimate \(\mathbf{T}\) (a cost shared by all semi-value-based approaches) requires several hours.
All experiments were conducted on a single consumer-grade workstation without specialized hardware.

Extending CDVM to even larger datasets such as ImageNet introduces two main bottlenecks: (i) estimating \(\mathbf T\) in terms of compute and memory, and (ii) solving the linear program at that scale. For context, prior studies have precomputed influence or memorization estimates on ImageNet by training thousands of ResNet-50 models\footnote{\url{https://pluskid.github.io/influence-memorization/}} demonstrating that such training is feasible even on ImageNet, although the resulting attribution matrix is very large (\(\approx 250\,\mathrm{GB}\) for train\(\times\)test). By thresholding \(\mathbf T\) to keep only its top 10\% nonzeros, this can be reduced to \(\approx25\,\mathrm{GB}\).

On the computational side, \(\mathbf{T}\) could be approximated using similarity measures between train and validation samples in feature space, in the spirit of \citet{Sorscher2022}. If the resulting linear program is still too large, column generation and warm starts across budgets could further reduce solve time. We leave these extensions to future work.

\section{Summary, Limitations \& Outlook}
In this work, we introduced Constraint-Data-Value-Maximization (CDVM), an optimization-based framework that leverages the data-attribution matrix $\mathbf{T}$ to prune low-value examples in low-data regimes. Across six OpenDataVal tasks, CDVM achieved strong performance while remaining computationally competitive.

The main bottleneck of CDVM, and of other approaches based on semi-values, is the computation and storage of \(\mathbf{T}\). On the computational side, future work could explore more efficient attribution estimators. On the storage side, exploiting sparsity or low-rank structure in $\mathbf{T}$ would substantially reduce memory usage and, in turn, speed up optimization. Additional gains in optimization time may be possible by solving the CDVM problem on partitioned submatrices of $\mathbf{T}$ rather than on the full matrix at once.

Finally, because the entries of $\mathbf{T}$ are not additive, CDVM does not explicitly capture higher-order interactions. Incorporating Shapley interaction indices \citep{muschalik2024shapiq} is an interesting direction to better model these effects.

\bibliographystyle{named}
\bibliography{library}

\section*{Use of Large Language Models (LLMs)}
LLMs were primarily used to enhance the paper's language and support code completion during implementation, as well as to define, refine, and improve the optimization problem. Although the initial concept originated with the authors, LLMs contributed  refinements and performance optimizations.

\section*{Reproducibility statement}
To ensure reproducibility, we provide the source code at \url{https://github.com/danilobr94/ijcai2026_cdvm}. The repository contains the code required to reproduce the experiments and plots reported in the paper via MLflow, as well as a standalone Jupyter notebook that computes the data-attribution matrix $\mathbf{T}$, formulates and solves the CDVM optimization, and compares the resulting subsets against a random baseline.

\appendix
\include{appendix}

\end{document}

%% file: appendix.tex
\clearpage
\onecolumn

\section{Notation Summary}
\label{app:notation}
Table~\ref{tab:notation_summary} summarizes the main notation used throughout the paper and appendix.

\begin{table}[!h]
  \centering
  \renewcommand{\arraystretch}{1.2}
  \begin{tabular}{p{0.22\linewidth}p{0.70\linewidth}}
  \toprule
  \textbf{Symbol} & \textbf{Meaning} \\
  \midrule
  $D=\{(x_i,y_i)\}_{i=1}^n$ & Training dataset with $n$ labeled instances. \\
  $d_i=(x_i,y_i)$ & Individual training instance. \\
  $T$ & Number of sampled subsets / trained models in the MSR estimator. \\
  $m$ & Number of test or validation instances. \\
  $f_D, \theta_D$ & Model trained on dataset $D$ and its parameters. \\
  $\mathcal{U}$ & Utility function, e.g., accuracy. \\
  $V(d_i)$ & Scalar data value assigned to training instance $d_i$. \\
  $w \in \{0,1\}^n$ & Binary selection vector indicating which training instances are retained. \\
  $S$ & Retention budget, i.e., number of training instances kept. \\
  $\mathbf{T} \in \mathbb{R}^{n \times m}$ & Attribution matrix whose entry $\mathbf{T}_{ij}$ measures the influence of training instance $i$ on test instance $j$. \\
  $v = \mathbf{T}^\top w$ & Induced per-test utility vector under selection $w$. \\
  $t_j$ & Slack variable penalizing influence above the threshold $\kappa$. \\
  $\alpha$ & Trade-off parameter between total utility and slack penalty in CDVM. \\
  $\kappa$ & Soft upper bound on per-test influence in CDVM. \\
  $p$ & Inclusion probability used to sample training subsets in the MSR estimator. \\
  \midrule
  $D = C_1 \cup \dots \cup C_K$ & Clustered training set used in the theoretical toy model. \\
  $\mathcal{T} = \mathcal{T}_1 \cup \dots \cup \mathcal{T}_K$ & Analogous partition of the test set in the clustered model. \\
  $K$ & Number of clusters. \\
  $n_k = |C_k|$ & Size of training cluster $C_k$. \\
  $m_k = |\mathcal{T}_k|$ & Size of test cluster $\mathcal{T}_k$. \\
  $u_k$ & Total utility contributed by cluster $C_k$ when it is represented. \\
  $\tau_k$ & Cluster-specific attribution level in the block-structured attribution matrix used in the CDVM cluster analysis. \\
  $s_k = \sum_{i \in C_k} w_i$ & Number of selected training points from cluster $C_k$. \\
  \bottomrule
  \end{tabular}
  \caption{Summary of the main notation used in the paper.}
  \label{tab:notation_summary}
\end{table}

\newpage 
\section{Result Details}
\label{appendix:results}
We provide supplementary details for the main paper. Table~\ref{tab:results} tabulates the numerical results underlying the benchmark curves, while Figures~\ref{fig_app:cdvm_p_benchmark} and~\ref{fig_app:cdvm_runtime_benchmark} plot CDVM's performance and runtime across different values of $p$ and $T\,$.

\input{results_table}

\newpage 
\begin{figure}[!h]
    \centering
    \includegraphics[width=0.49\linewidth]{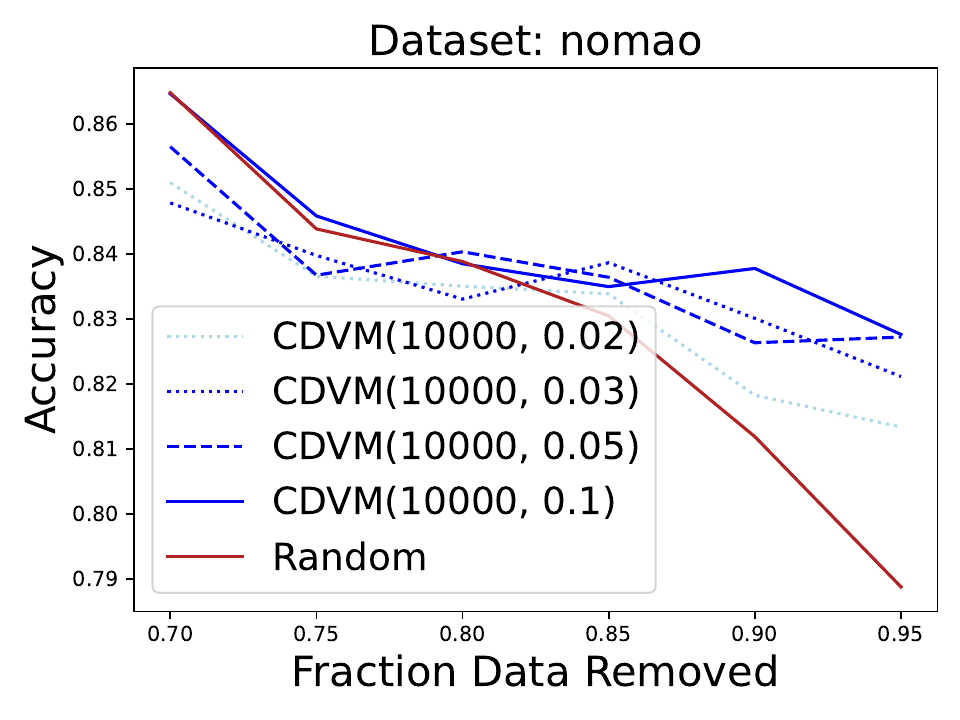}
    \includegraphics[width=0.49\linewidth]{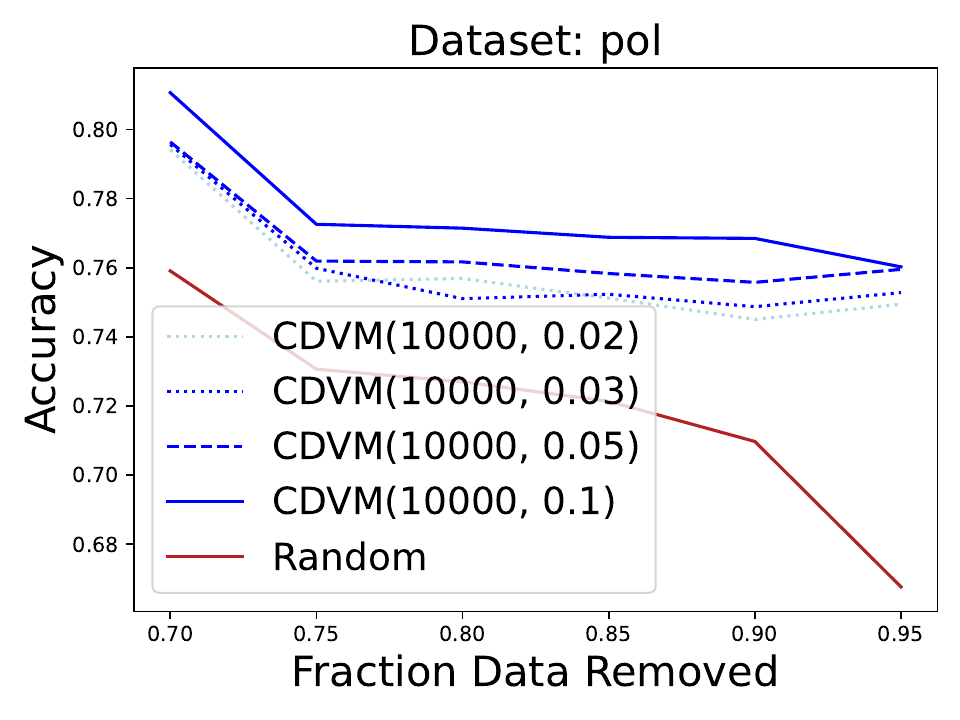}
    
    \includegraphics[width=0.49\linewidth]{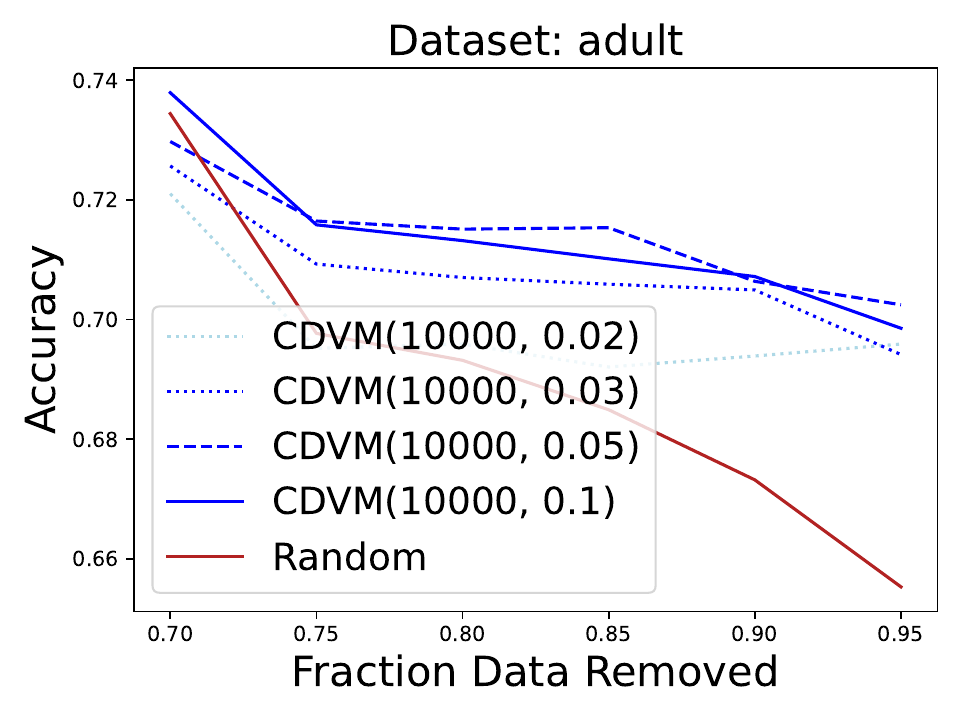}
    \includegraphics[width=0.49\linewidth]{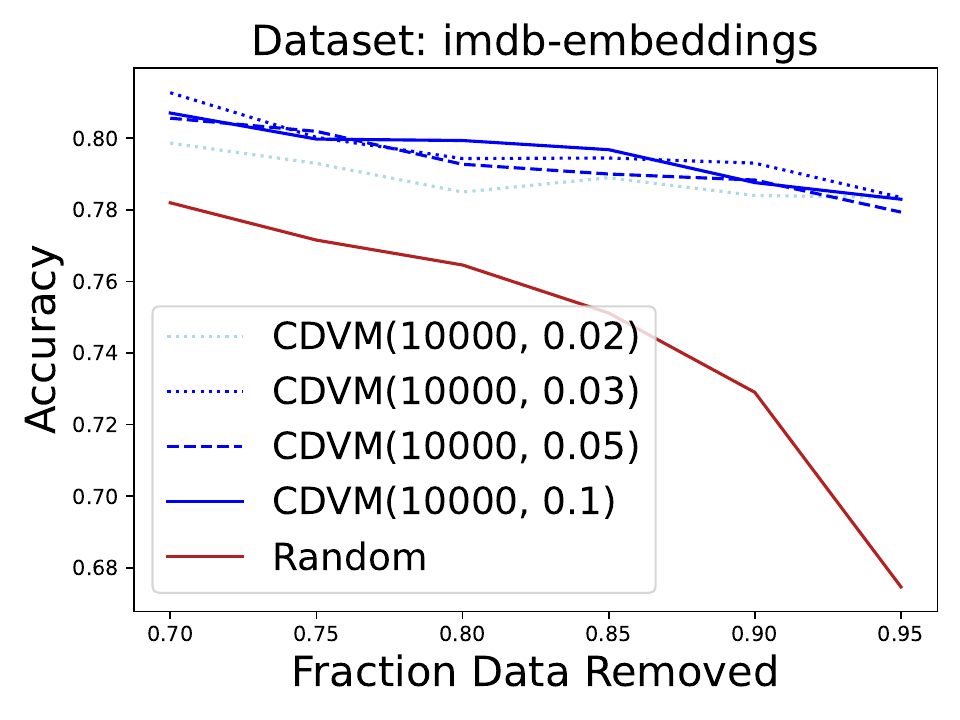}
    
    \includegraphics[width=0.49\linewidth]{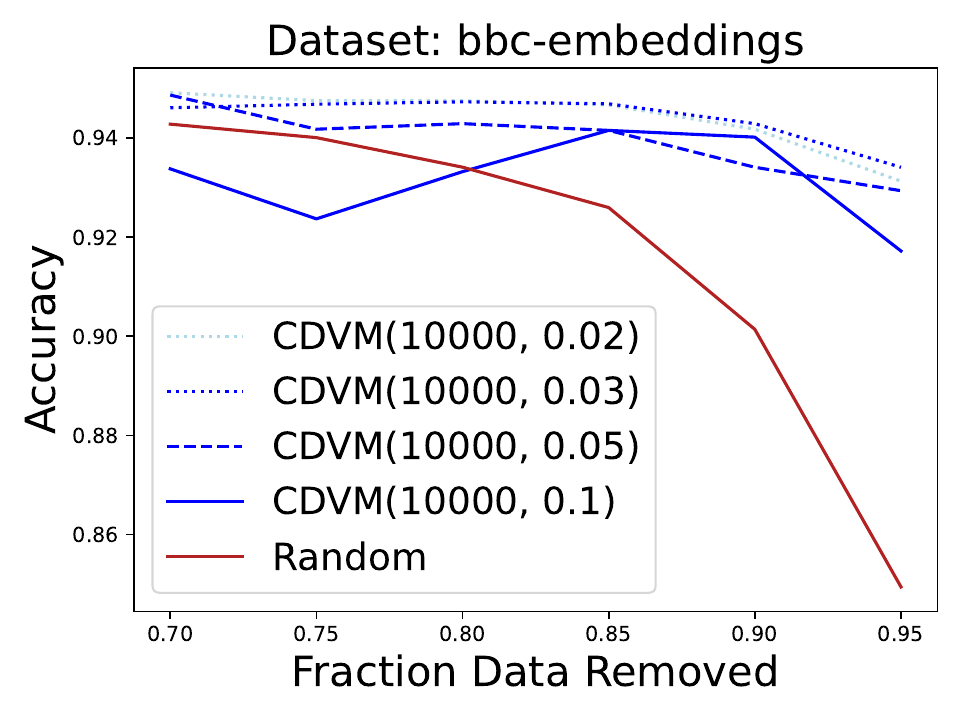}
    \includegraphics[width=0.49\linewidth]{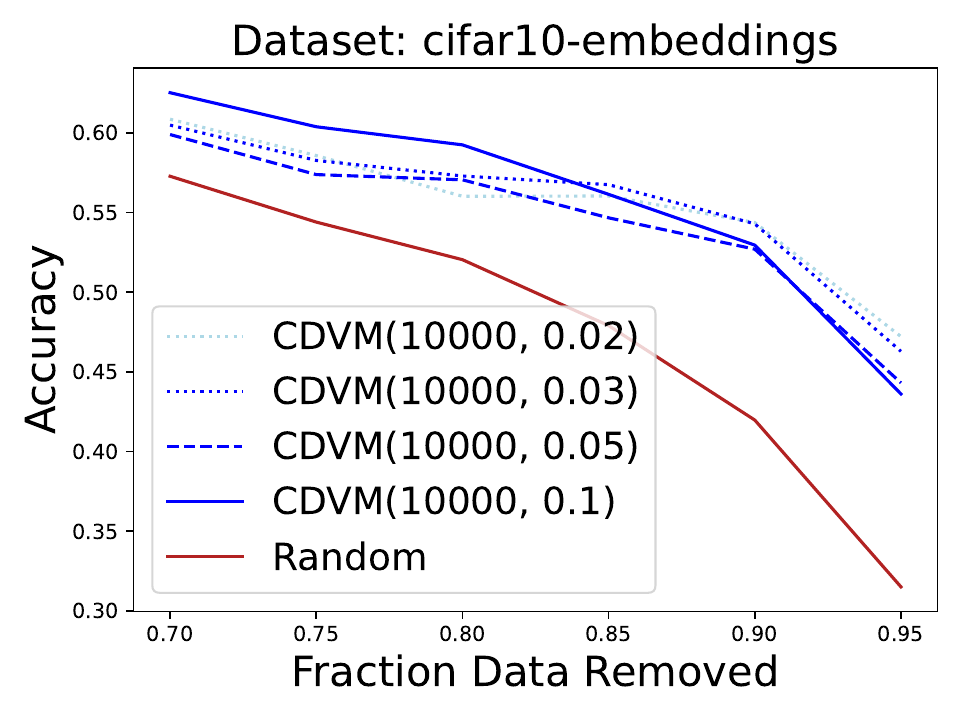}

    \caption{CDVM performance for different sampling probabilities \(p\in\{0.02,0.03,0.05,0.10\}\) on six datasets. 
    A higher sampling rate (\(p=0.10\)) yields the best pruning accuracy on Nomao, POL, and Adult, and outperforms lower \(p\) values up to 85\% removal on CIFAR-10, but degrades performance on BBC.
  }
    \label{fig_app:cdvm_p_benchmark}
\end{figure}

\newpage 
\begin{figure*}[!ht]
    \centering
    \includegraphics[width=0.49\linewidth]{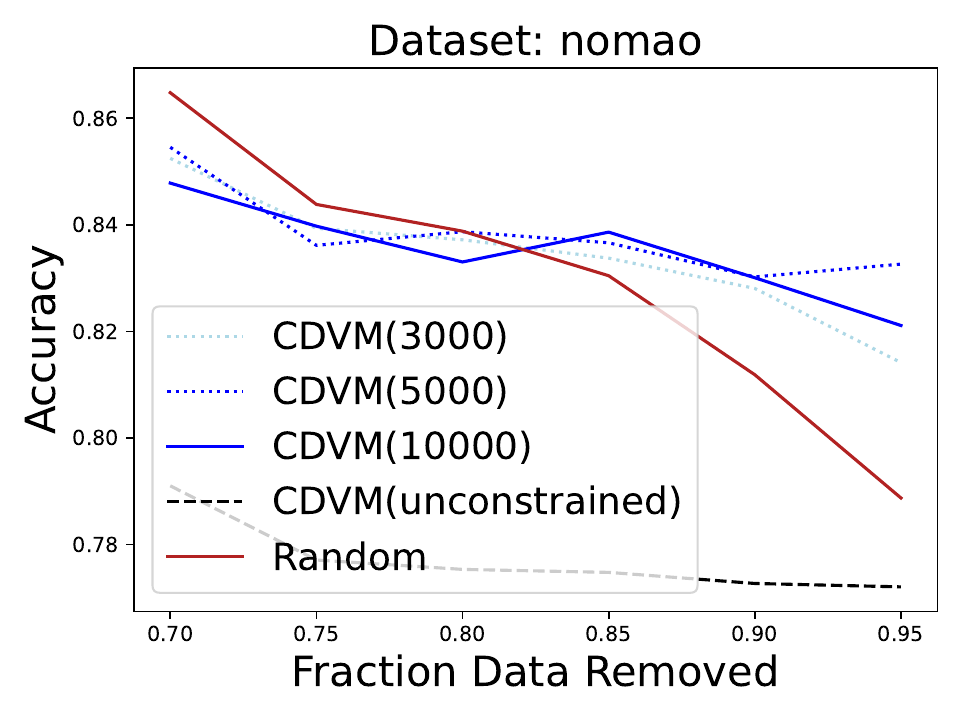}
    \includegraphics[width=0.49\linewidth]{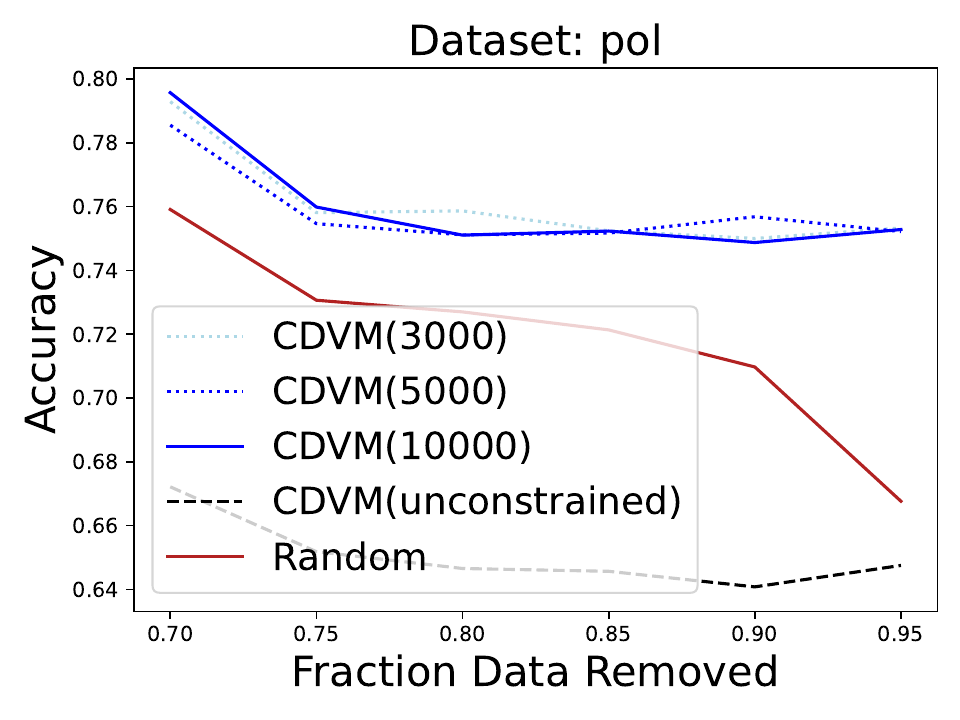}
    
    \includegraphics[width=0.49\linewidth]{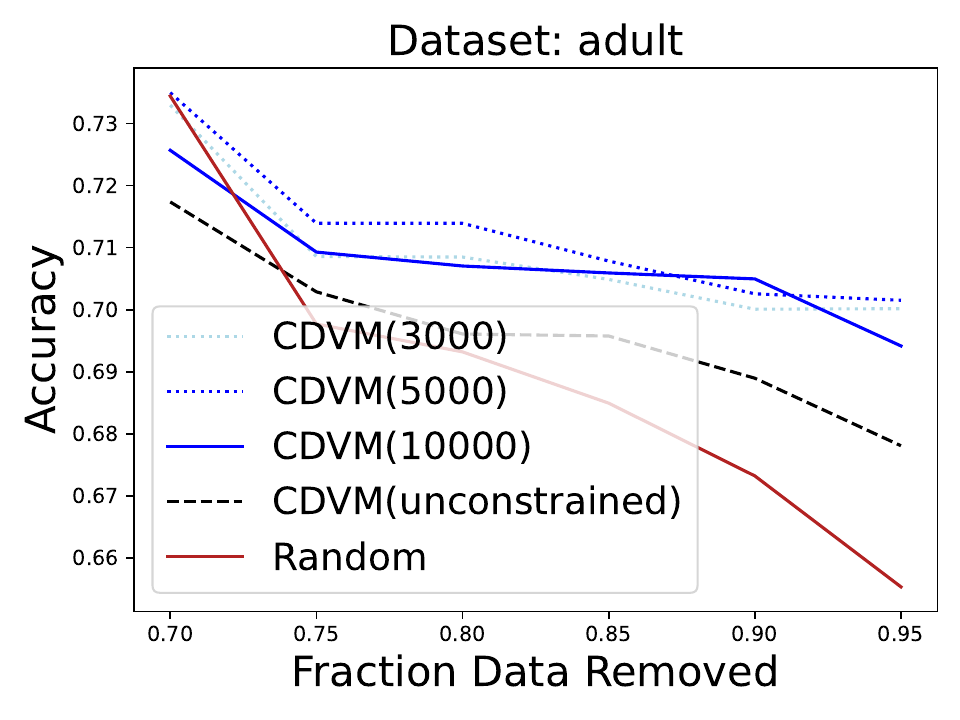}
    \includegraphics[width=0.49\linewidth]{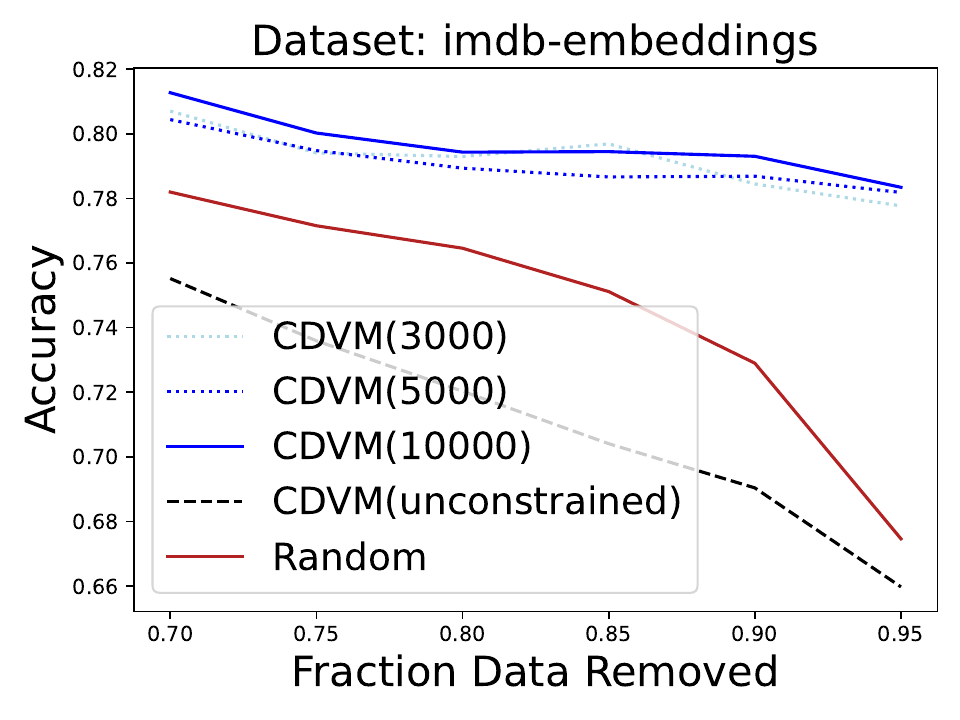}
    
    \includegraphics[width=0.49\linewidth]{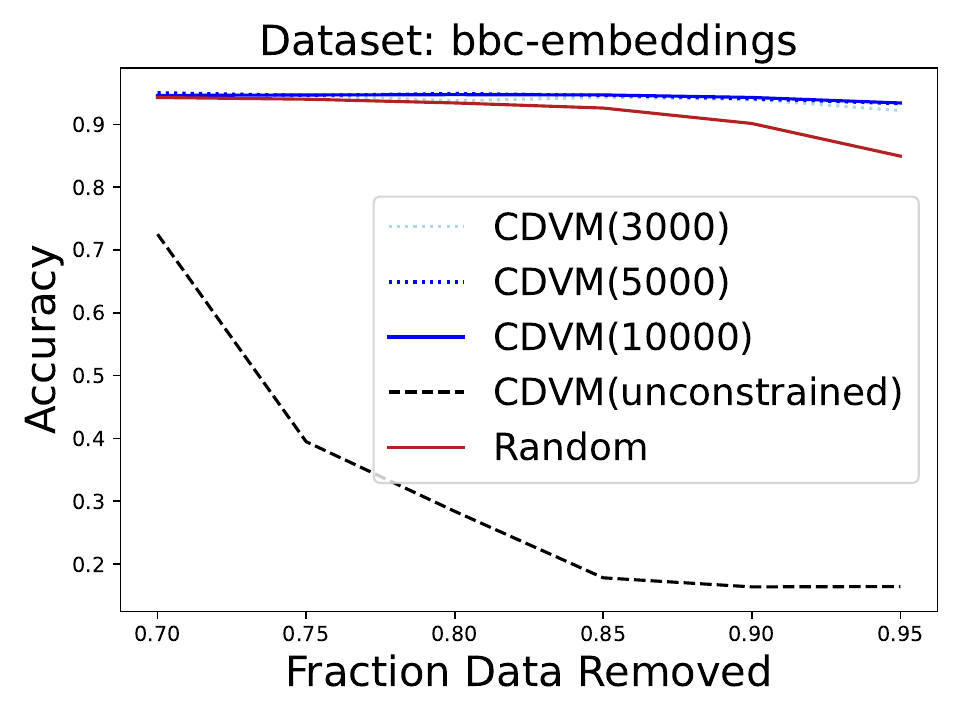}
    \includegraphics[width=0.49\linewidth]{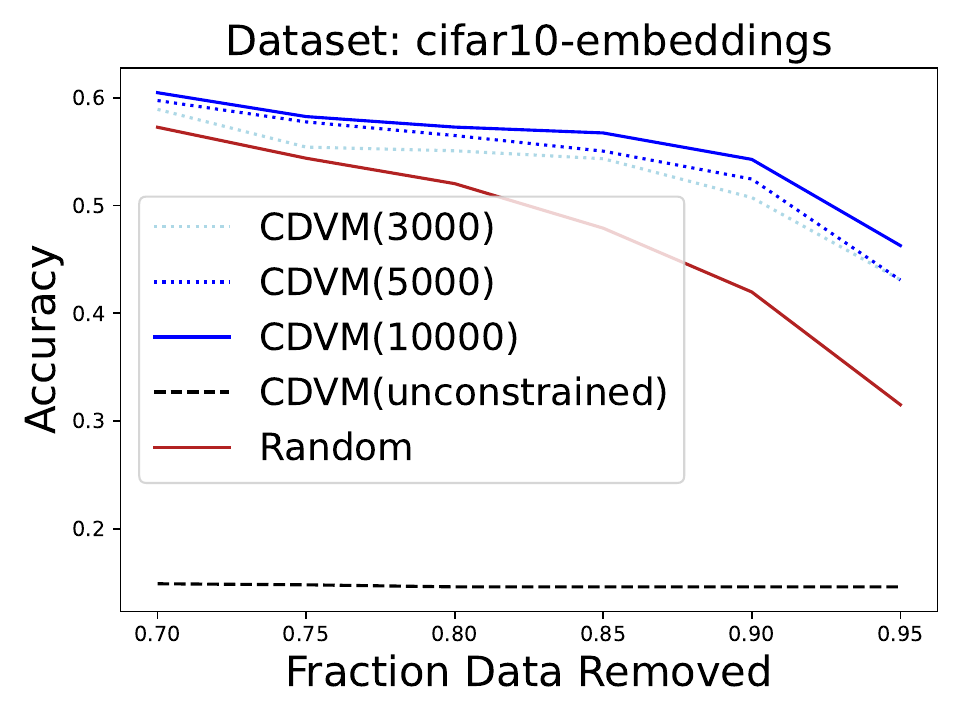}

    \caption{Effect of model-count and slack constraint on CDVM's runtime and accuracy. For each dataset, we compare CDVM using 3,000, 5,000, and 10,000 models to estimate the attribution matrix \(\mathbf{T}\), as well as a variant without the \(\kappa\) constraint. This unconstrained variant corresponds to DataBanzhaf computed from the same attribution estimates and mainly serves as a baseline showing that CDVM's gains come from the optimization rather than additional computation. In general, increasing the number of models improves pruning quality at the cost of longer runtime, while removing the slack constraint causes a severe drop in performance.}
    \label{fig_app:cdvm_runtime_benchmark}
\end{figure*}

\newpage 
\section{Ablation Study Details}
\label{app:ablation_study}
In addition, we detail our ablation study, particularly the performance normalization procedure, provide deeper insight into algorithm runtimes by distinguishing preparation and optimization times, analyze the persistence structure of retained instances across retention budgets, and demonstrate CDVM's scalability on a full dataset.

\subsection{Performance Normalization}
In Figure~\ref{fig:ablation}, we condense each method's performance across all evaluation settings into a single normalized score. To do so, we normalize each method's total score by the sum of the best- and worst-case performances.  Formally, let \(S\) be the set of all 36 evaluation settings (6 datasets $\times$ 6 pruning levels), and let \(p_{m,s}\) denote the test accuracy of method \(m\) on setting \(s\). Define

\[
P_m \;=\;\sum_{s \in S} p_{m,s}, 
\qquad
P_{\max} \;=\;\sum_{s \in S} \max_{m'} p_{m',s}, 
\qquad
P_{\min} \;=\;\sum_{s \in S} \min_{m'} p_{m',s}.
\]

Then the normalized performance of method \(m\) is

\[
\widetilde P_m
\;=\;
\frac{P_m - P_{\min}}{P_{\max} - P_{\min}}\,,
\]

which maps the aggregate score of each method into the interval \([0,1]\).

\subsection{Optimization and Runtime}
\begin{table}[!h]
    \centering
    \renewcommand{\arraystretch}{1.5}
    \begin{tabular}{lccc}
    
    \toprule
        \textbf{Method}   & 
        \textbf{Preparation Time} & 
        \textbf{Optimization Time} &
        \textbf{Total Time} \\
    
    \midrule
        DataOob   & 
        135 s (1,000 models)& 
        n.a. &
        135 s \\
        
        CDVM & 
        97 s (5,000 models)      
        & 10 s 
        & 107 s\\
        & 184 s (10,000 models)  & 
        & 194 s  \\
    
        InfOpt & 
        6 h (1,000 data instances)& 
        366 s (cifar-10)  
        & $\approx$ 6h \\
        & & 3 s (imdb) 
        & $\approx$ 6h \\
    
    \bottomrule
    \end{tabular}

    \caption{Runtime comparison (preparation + optimization). Wall-clock times for (i) DataOob/memorization, (ii) CDVM, and (iii) Influence-Function Optimization (InfOpt) on the OpenDataVal benchmark. DataOob uses bootstrap samples of size \(n\) (with replacement) and retrains \(T_{\mathrm{OOB}}=1{,}000\) models.  CDVM samples each training point with probability \(p=0.03\) and retrains \(T_{\mathrm{CDVM}}=5{,}000\) (or \(10{,}000\)) models to achieve stable estimates. Because each CDVM model sees only \(3\%\) of the data, individual training runs are much faster. InfOpt avoids retraining but must invert a Hessian per training instance and solve a quadratic program, resulting in multi-hour runtimes. Our method has constant optimization time because all datasets are scaled to the same size (1,000 training and 500 validation + test instances). For InfOpt, optimization time scales with the dataset's input dimension, whereas preparation time remains largely constant. }
    \label{tab:appendix_runtime}

\end{table}

\newpage 
\subsection{Frequency Spectrum of Training Instances Across Budgets}
\label{app:nestedness}

Figure~\ref{fig:ablation}(d) shows that off-diagonal overlap coefficients consistently exceed diagonal ones, indicating that larger retained subsets largely contain the smaller ones. This stands in apparent contrast to Figure~\ref{fig:empirical_examples} (center), which showed that optimal subsets are strongly budget-specific: accuracy peaks sharply at each target retention level and collapses immediately beyond it. To better understand this interplay, Figure~\ref{fig:appendix_nestedness} provides a fine-grained decomposition of all training instances by their selection frequency across budgets.

Specifically, the figure uses the
mean retention frequency $f(i, B)$ averaged across seeds as the primary
signal (e.g.\ $f(i, 5\%) = 0.6$ means that training point $i$ was selected
in at least 60\% of the 25 random seeds at retention level 5\%).
Each instance is first classified as either \emph{majority-selected}
($f(i, B) > 0.5$ for at least one budget) or \emph{sub-majority}
($f(i, B) \leq 0.5$ at every budget).
The majority-selected instances are further partitioned by persistence into
\emph{budget-specific} (majority at exactly one level),
\emph{useful pool, low} (majority at two to three levels),
\emph{useful pool, high} (majority at four to five levels), and
\emph{stable core} (majority at all six levels).
The sub-majority instances are decomposed by their peak frequency
$\max_B f(i, B)$ into four bands:
\emph{approaching} ($\max_B f \in [0.3, 0.5)$),
\emph{occasionally selected} ($[0.1, 0.3)$),
\emph{rarely selected} ($[0.01, 0.1)$), and
\emph{virtually never selected} ($< 0.01$).
The plot is sorted by the total majority-selected fraction.

\paragraph{Sub-majority dominates.}
On average, 84.6\% of instances (range: 77.7\%--95.9\%) remain
sub-majority across all retention levels.
Crucially, this group is \emph{not} inactive: within it,
the \emph{approaching} band alone accounts for a mean of 31.7\% of all
instances (up to 49.2\% for \textsc{pol}), meaning these instances are
frequently selected by a substantial minority of seeds but consistently fall
short of majority status.
A further 40.6\% on average are \emph{occasionally selected}
($\max_B f \in [0.1, 0.3)$).
Only 2.2\% of all instances are \emph{virtually never selected}
($\max_B f < 0.01$).

\paragraph{Majority-selected instances form a small, structured minority.}
Across datasets, only 4.1\% (\textsc{pol}) to 22.3\% (\textsc{imdb}) of
instances reach majority-selection status at any budget (mean: 15.4\%).
Within this minority, budget-specific instances account for
2.7\%--11.4\% (mean 6.8\%).
The useful pool splits further into a low-persistence group (majority at
two to three budgets: mean 6.2\%, range 0.8\%--11.0\%) and a
high-persistence group (majority at four to five budgets: mean 2.2\%,
range 0.3\%--4.1\%), indicating that cross-budget consistency within the
useful pool is itself graded rather than binary.
The stable core, instances majority-selected at \emph{all} six retention
levels, is negligibly small at 0.1\%--0.4\% (mean 0.2\%), confirming that
truly budget-invariant instances are virtually absent.

\paragraph{Dataset variation.}
\textsc{pol} is the clearest outlier: with only 4.1\% majority-selected
instances and 49.2\% of all instances in the approaching band, its solution
landscape is nearly flat, with many instances competing for retention slots
at similar frequencies.
\textsc{imdb} occupies the opposite extreme, with 22.3\% majority-selected
instances and the largest useful pool overall (low: 11.0\%, high: 4.1\%).

Taken together, the decomposition indicates that budget-induced differences
in the retained set affect a small minority of instances
(budget-specific: mean 6.8\%) rather than the population at large.
Moreover, the negligible stable core (mean 0.2\%) confirms that no
meaningful set of instances is consistently majority-selected across all
retention levels.

\begin{figure}[!ht]
    \centering
    \includegraphics[width=.65\textwidth]{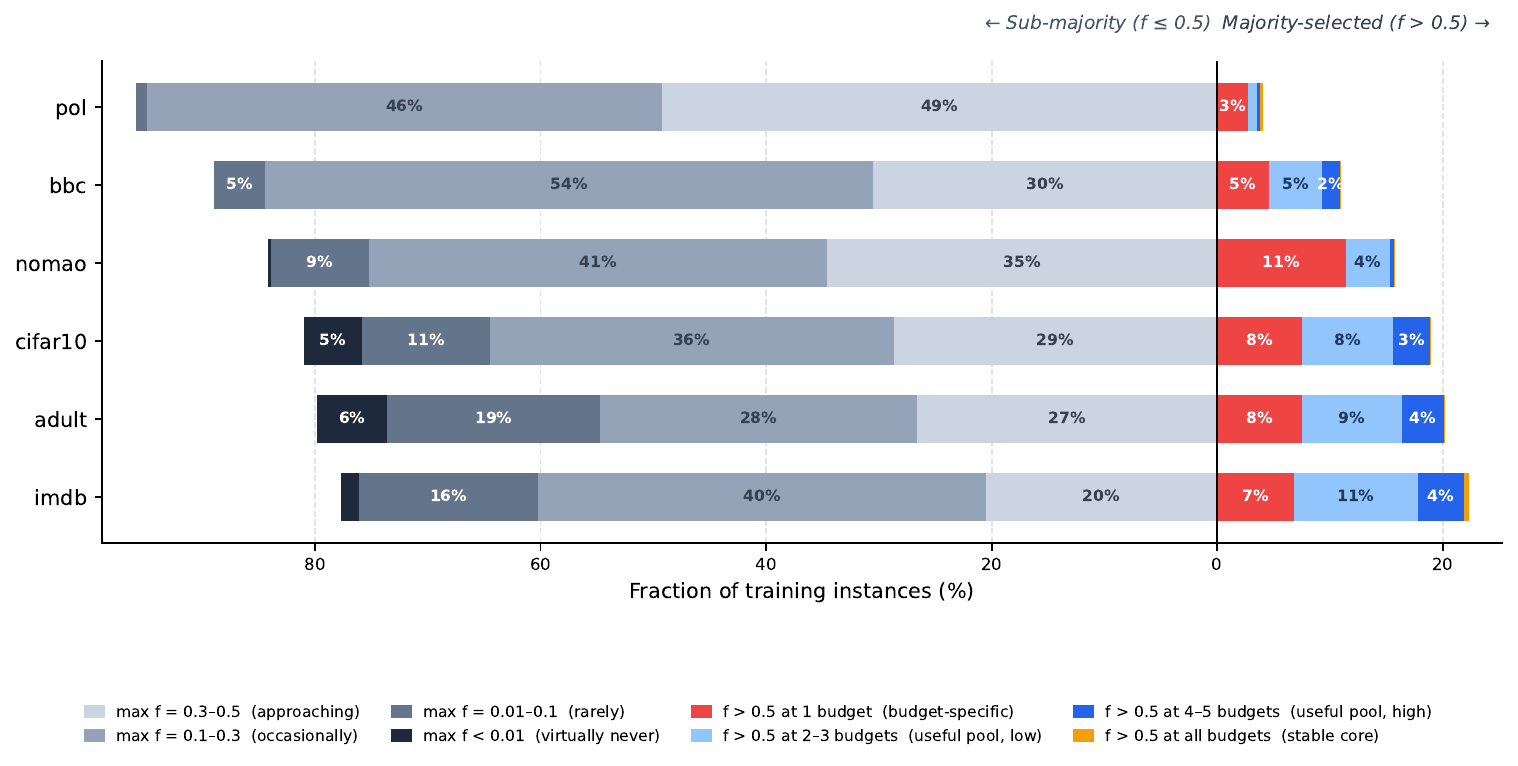}
    \caption{
        Selection frequency spectrum of training instances, sorted by total
        majority-selected fraction (highest at top).
        Each bar is divided at the majority threshold $f = 0.5$:
        the \textbf{left side} decomposes sub-majority instances
        ($f \leq 0.5$ at all budgets) by peak frequency $\max_B f(i,B)$
        into four bands, approaching majority (light grey, $[0.3, 0.5)$),
        occasionally selected ($[0.1, 0.3)$), rarely selected
        ($[0.01, 0.1)$), and virtually never selected (dark, ${<}0.01$);
        the \textbf{right side} decomposes majority-selected instances
        ($f > 0.5$ at $\geq\!1$ budget) by persistence into
        budget-specific (red, exactly one level),
        useful pool low (light blue, two to three levels),
        useful pool high (dark blue, four to five levels),
        and stable core (gold, all six levels).
        On average, 84.6\% of instances are sub-majority, of which only
        2.2\% are virtually never selected;
        the majority-selected minority (mean 15.4\%) comprises
        budget-specific instances (6.8\%),
        a low-persistence useful pool (6.2\%),
        a high-persistence useful pool (2.2\%),
        and a negligible stable core (0.2\%).
    }
    \label{fig:appendix_nestedness}
\end{figure}

\newpage 
\subsection{Scalability of CDVM to Larger Datasets}
\label{app:scaling}

\begin{figure*}[!ht]
    \centering
    \includegraphics[width=0.32\linewidth]{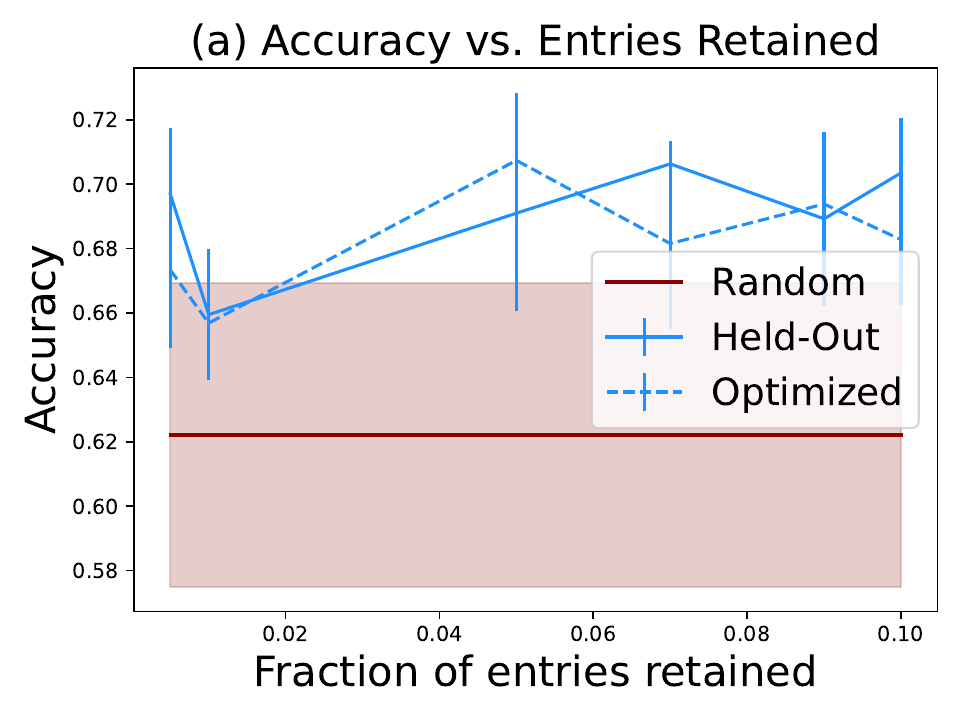}
    \includegraphics[width=0.32\linewidth]{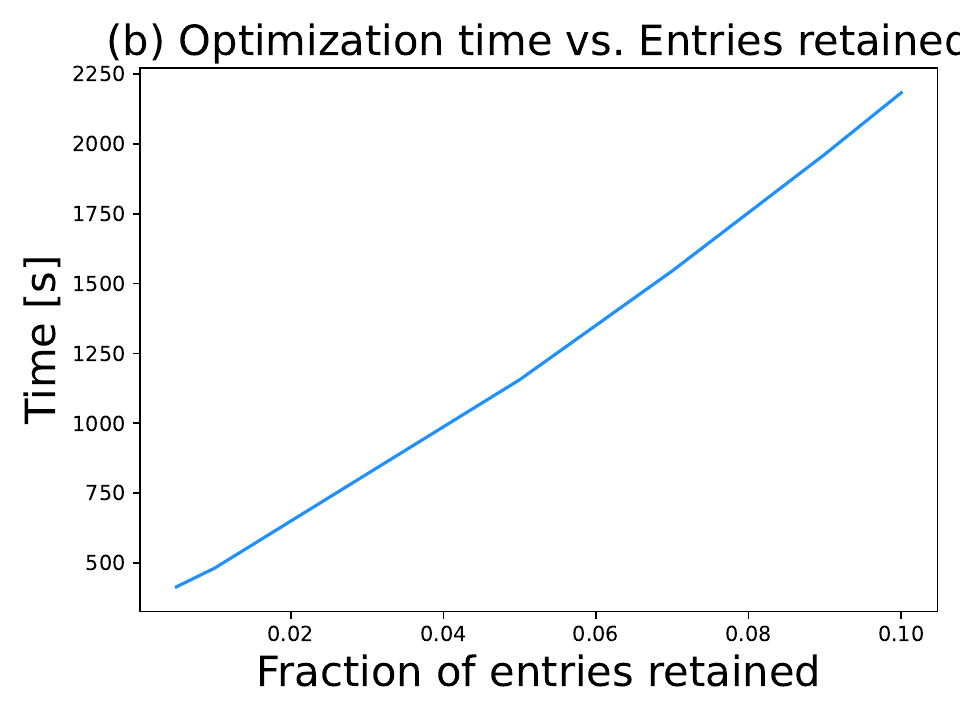}
    \includegraphics[width=0.32\linewidth]{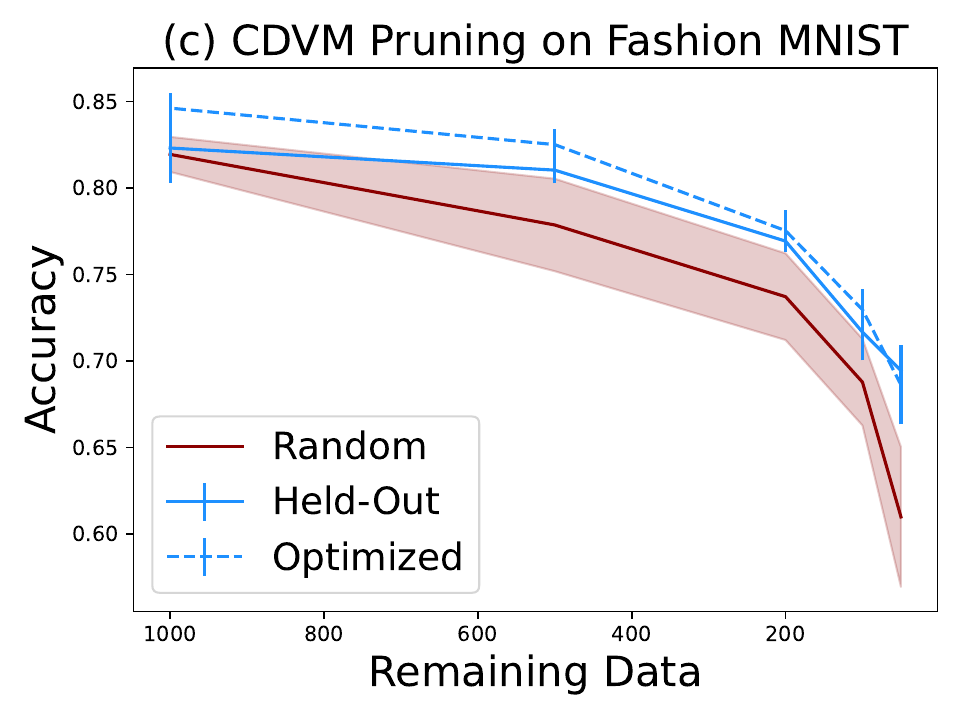}
    \caption{\textbf{(a)} Test accuracy for different sparsity cut-offs in the attribution matrix, expressed as the percentage of entries retained by not setting them to zero. 
    \textbf{(b)} Runtime (in seconds) for solving the CDVM optimization on each corresponding sparse matrix. 
    \textbf{(c)} CDVM test performance compared against a random-pruning baseline.}
    \label{fig:appendix_scaling}
\end{figure*}

So far, we have evaluated CDVM on the OpenDataVal benchmark by subsampling each split to 1\,000/500/500 (train/validation/test), enabling rapid comparisons across methods. To assess scalability on a larger attribution matrix, we apply CDVM to Fashion-MNIST (60\,000 training and 10\,000 test images). Fashion-MNIST was chosen because it offers a sizable dataset while still permitting fast model training. As before, the resulting attribution matrix $\mathbf{T}$ is highly sparse: many training instances have zero or negligible influence on most test samples. However, due to numerical noise, most entries remain small nonzero values and must be filtered out.

To study the effect of this residual noise, we threshold $\mathbf{T}$ by retaining only the top 0.5\,\%--10\,\% of its entries and setting all others to zero. The corresponding test accuracies are shown in Figure~\ref{fig:appendix_scaling}. We split the test set into an evaluation partition, for selecting training examples and tuning hyperparameters, and a held-out test partition for final performance assessment, and report results on both.

As shown, retaining just 5\,\% of the examples suffices to achieve high accuracy; including more examples yields no further benefit. Estimating $\mathbf{T}$ requires retraining between 5\,000 and 8\,000 models, which takes approximately 1--10 hours on a standard workstation, depending on the subsampling rate~$p$.

\newpage
\section{Cluster-Size Bias of Shapley and Banzhaf Data Values}
\label{sec:app_cluster_bias}

Let the training set be partitioned into $K$ disjoint clusters
\[
D \;=\; C_1 \cup \dots \cup C_K, 
\qquad C_k \cap C_\ell = \emptyset \text{ for } k\neq \ell,
\]
with $|C_k| = n_k \ge 1$ and total size $n = \sum_{k=1}^K n_k$.
We consider a cooperative game $(D, v)$ with characteristic function
\[
v(S)
\;=\;
\sum_{k=1}^K u_k \,\mathbf{1}\{S \cap C_k \neq \emptyset\},
\qquad S \subseteq D,
\]
where $u_k > 0$ is the contribution (utility) of cluster $C_k$, and
$\mathbf{1}\{\cdot\}$ is the indicator function.

\subsection{Shapley Data Values}
\label{sec:app_bias_shapley}

For any training instance $i \in C_k$, the Shapley value is
\[
\phi_i^{\text{Shapley}} \;=\; \frac{u_k}{n_k}.
\]
In particular, if all clusters have equal contribution $u_k = u$, then
\[
\phi_i^{\text{Shapley}} \;=\; \frac{u}{n_k}, \qquad i \in C_k,
\]
so points in larger clusters receive strictly smaller Shapley values.

\begin{proof}[Proof sketch]
Recall the permutation definition of the Shapley value:
\[
\phi_i
\;=\;
\mathbb{E}_{\pi}\!\left[
  v\bigl(S_i(\pi) \cup \{i\}\bigr) - v\bigl(S_i(\pi)\bigr)
\right],
\]
where $\pi$ is a uniformly random permutation of $D$ and
$S_i(\pi)$ is the set of players appearing before $i$ in $\pi$.

Fix a cluster $C_k$ and a point $i \in C_k$.  
By the definition of $v$, the marginal contribution of $i$ is
\[
v\bigl(S_i(\pi) \cup \{i\}\bigr) - v\bigl(S_i(\pi)\bigr)
=
\begin{cases}
u_k, & \text{if } S_i(\pi) \cap C_k = \emptyset,\\[1ex]
0,   & \text{otherwise.}
\end{cases}
\]
That is, $i$ contributes $u_k$ if and only if it is the \emph{first} point from
its cluster in the permutation $\pi$.
Because $\pi$ is uniform and all $n_k$ members of $C_k$ are symmetric within the
cluster, the probability that $i$ is the first point of $C_k$ in $\pi$ is
\[
\mathbb{P}\bigl(i \text{ is first in } C_k\bigr) = \frac{1}{n_k}.
\]
Hence
\[
\phi_i^{\text{Shapley}}
=
\mathbb{E}_{\pi}\bigl[
  v(S_i(\pi) \cup \{i\}) - v(S_i(\pi))
\bigr]
=
u_k \cdot \mathbb{P}\bigl(i \text{ is first in } C_k\bigr)
=
\frac{u_k}{n_k},
\]
which simplifies to $\phi_i^{\text{Shapley}} = u/n_k$ when $u_k = u$.
\end{proof}

\newpage 
\subsection{Data Banzhaf Values}
\label{sec:app_bias_banzhaf}

Data Banzhaf for a point $i$ is defined as
\[
\phi_i^{\text{Banzhaf}}
\;=\;
\frac{1}{2^{n-1}}
\sum_{S \subseteq D\setminus \{i\}}
\Bigl( v(S\cup\{i\}) - v(S) \Bigr).
\]

For every training instance $i \in C_k$ we obtain
\[
\phi_i^{\text{Banzhaf}} \;=\; \frac{u_k}{2^{\,n_k-1}}.
\]

\begin{proof}[Proof sketch]
Fix $i \in C_k$. By the definition of $v$,
\[
v(S\cup\{i\}) - v(S)
=
\begin{cases}
u_k, & \text{if } S \cap C_k = \emptyset,\\[1ex]
0,   & \text{otherwise.}
\end{cases}
\]
Thus only coalitions $S$ that contain no point from $C_k$ contribute to the sum.

The set $D \setminus C_k$ contains $n - n_k$ elements, and any subset of these
elements yields a coalition $S$ with $S \cap C_k = \emptyset$. Hence the number
of such coalitions is
\[
\bigl|\{ S \subseteq D\setminus\{i\} : S \cap C_k = \emptyset \}\bigr|
= 2^{\,n - n_k}.
\]
For each of these coalitions, the marginal contribution is $u_k$, and for all
others it is zero. Therefore,
\[
\phi_i^{\text{Banzhaf}}
=
\frac{1}{2^{n-1}} \cdot 2^{\,n - n_k} \cdot u_k
=
\frac{u_k}{2^{\,n_k-1}}.
\]
\end{proof}

If all clusters have equal contribution $u_k = u$, then
\[
\phi_i^{\text{Banzhaf}} = \frac{u}{2^{\,n_k-1}}, \qquad i \in C_k,
\]
which decreases exponentially in the cluster size $n_k$.

\newpage
\subsection{Interpretation and Equal Train--Test Distributions}
\label{subsec:app_interpretation_and_equal_train_test}

If all clusters share the same utility $u_k = u$, then both Data Shapley and
Data Banzhaf assign smaller values to points in larger clusters. For any
$i \in C_k$,
\[
\phi_i^{\text{Shapley}} = \frac{u}{n_k}
\quad\text{and}\quad
\phi_i^{\text{Banzhaf}} = \frac{u}{2^{\,n_k-1}}.
\]
Both expressions are strictly decreasing in $n_k$, so although the absolute
scales differ, Shapley and Banzhaf induce the \emph{same ranking} of points:
instances from smaller clusters are always ranked above instances from larger
clusters.

More generally, the formulas
\[
\phi_i^{\text{Shapley}} = \frac{u_k}{n_k},
\qquad
\phi_i^{\text{Banzhaf}} = \frac{u_k}{2^{\,n_k-1}}
\]
admit a common interpretation. The numerator $u_k$ captures the \emph{total
utility} of cluster $C_k$, while the denominator encodes its \emph{redundancy}
in the training set. For Shapley, the value $u_k/n_k$ corresponds to sharing the
cluster's total contribution equally among its $n_k$ members: the more training
points play the same role, the less each individual point is worth. For Banzhaf,
the exponential factor $2^{n_k-1}$ reflects that, in the power-index setting,
the larger the cluster, the more coalitions already contain some member of $C_k$,
rendering $i$'s marginal contribution smaller.

\paragraph{Utility as test performance.}
Let the test set be partitioned analogously as
\[
  \mathcal{T} = \mathcal{T}_1 \cup \cdots \cup \mathcal{T}_K, \quad |\mathcal{T}_k| = m_k
\]
so that $\mathcal{T}_k$ contains the test points associated with cluster $C_k$, where a test point in $\mathcal{T}_k$ is predicted correctly if and
only if the training set contains at least one point from $C_k$. If the cluster utility
\(u_k\) measures, for instance, the contribution to overall accuracy, then it
is proportional to the number of test points in that cluster, \(m_k\). We can
write
\[
u_k = \lambda_1 m_k,
\]
with \(\lambda_1 = \tfrac{1}{m}\) for standard (normalized) accuracy, where
\(m = \sum_{k=1}^K m_k\) is the total number of test points.

Under this parametrization, the per-point values become
\[
\phi_i^{\text{Shapley}} = \frac{u_k}{n_k}
= \frac{\lambda_1 m_k}{n_k},
\qquad
\phi_i^{\text{Banzhaf}} = \frac{u_k}{2^{\,n_k-1}}
= \frac{\lambda_1 m_k}{2^{\,n_k-1}}.
\]
Hence, a point $i \in C_k$ becomes more valuable if its cluster matters for
many test instances (large $m_k$), but less valuable if many redundant training
points share the same role (large $n_k$).

\paragraph{Equal train--test distributions.}
An instructive edge case arises when train and test are \emph{equally
distributed} over clusters. This means there exists a constant $\lambda_2 > 0$
such that
\[
m_k = \lambda_2 n_k
\quad\text{for all } k = 1,\dots,K.
\]
Substituting into the expressions above yields, for any $i \in C_k$,
\[
\phi_i^{\text{Shapley}}
= \frac{\lambda_1 m_k}{n_k}
= \frac{\lambda_1 \lambda_2 n_k}{n_k}
= \lambda_1 \lambda_2,
\]
which is \emph{independent} of the cluster index $k$. Thus, when train and test
follow the same cluster distribution (even if their absolute sizes differ),
Shapley assigns the same value to every training instance: all points are
equally important because each cluster's influence on the test set is exactly
matched by its representation in the training set.

For Data Banzhaf, we obtain
\[
\phi_i^{\text{Banzhaf}}
= \frac{\lambda_1 m_k}{2^{\,n_k-1}}
= \frac{\lambda_1 \lambda_2\, n_k}{2^{\,n_k-1}}.
\]
The factor $n_k/2^{\,n_k-1}$ is non-increasing in $n_k$ (and strictly decreasing
for $n_k \ge 2$), so even under equal train--test distributions, Banzhaf
\emph{does not} cancel the dependence on cluster size: points from smaller
clusters receive strictly larger values than points from larger clusters.

\paragraph{Implications for pruning.}
In this clustered setting, both notions are biased for data pruning:

\begin{itemize}
    \item Under equal train--test distributions, Shapley assigns identical
          values to all points, so any pruning strategy based solely on Shapley
          values reduces to an essentially arbitrary order.
    \item Data Banzhaf continues to favor points in small clusters and penalize
          large clusters exponentially. A Banzhaf-based pruning strategy thus
          tends to remove points from large clusters first, even when keeping at
          least one representative per cluster would be optimal from a test-utility
          perspective.
\end{itemize}

\newpage 
\section{CDVM in a Clustered Setting}
\label{sec:app_cdvm_cluster}

We now show that, in the same clustered scenario used to analyze Shapley
and Banzhaf data values, CDVM behaves qualitatively differently:
it avoids pruning entire clusters as long as the pruning budget
permits.

\subsection{CDVM Objective in the Cluster Model}

As before, let the training set be partitioned into $K$ disjoint clusters
\[
D \;=\; C_1 \cup \dots \cup C_K,
\qquad C_k \cap C_\ell = \emptyset \text{ for } k \neq \ell,
\]
with $|C_k| = n_k$ and total size $n = \sum_{k=1}^K n_k$.
Let the test set be partitioned analogously as
\[
\mathcal{T} = \mathcal{T}_1 \cup \cdots \cup \mathcal{T}_K,
\quad |\mathcal{T}_k| = m_k,
\]
so that $\mathcal{T}_k$ contains the test points associated with cluster $C_k$.

The attribution matrix $\mathbf{T} \in \mathbb{R}^{n \times m}$ for this setup is block-structured of the form
\[
\mathbf{T}_{ij} =
\begin{cases}
\tau_k > 0, & \text{if } i \in C_k,\; j \in \mathcal{T}_k,\\[0.4ex]
0,          & \text{otherwise,}
\end{cases}
\]
i.e., each training point in cluster $C_k$ contributes the same
positive amount $\tau_k$ to each test point in $\mathcal{T}_k$ and nothing to
other clusters.

Let $w \in \{0,1\}^n$ be the selection vector and
\[
  v = \mathbf{T}^\top w \in \mathbb{R}^m
\]
the induced influence on test points. In this clustered setting,
\[
  v_j = \tau_k\,s_k
  \quad\text{for all } j \in \mathcal{T}_k,
  \qquad
  s_k := \sum_{i \in C_k} w_i,
\]
i.e., $s_k$ is the number of selected training points from cluster $C_k$.

For ease of exposition, we work with the simplified CDVM surrogate obtained
by setting $\alpha = 0.5$ and eliminating slack variables (cf.\ the main
objective in Section~\ref{sec:cdvm}): this yields the problem
\[
  \max_{w \in \{0,1\}^n}
  \sum_{j=1}^m \min\{\kappa, v_j\}
  \quad\text{s.t.}\quad
  \sum_{i=1}^n w_i = S,
\]
where $S$ is the pruning budget (number of points kept) and $\kappa > 0$
is the slack threshold. In the clustered model, the objective becomes
\[
  \sum_{j=1}^m \min\{\kappa, v_j\}
  =
  \sum_{k=1}^K m_k \min\{\tau_k s_k, \kappa\}.
\]
Thus CDVM reduces to a \emph{cluster coverage} objective: each cluster
contributes $m_k \min\{\tau_k s_k, \kappa\}$ as soon as at least one
of its training points is kept ($s_k \ge 1$), and additional points from
the same cluster yield only extra gain as long as the product $\tau_k s_k$
has not yet saturated at $\kappa$.

\newpage
\subsection{CDVM Avoids Pruning Entire Clusters}
Under the clustered utility model above, an \emph{optimal} pruning strategy
removes all but one point from each cluster before removing any cluster's final
representative, and only starts emptying clusters once the budget no longer
allows keeping one point per cluster.

We now show that CDVM exhibits the same qualitative behavior: as long as the
budget \(S\) allows keeping one point per cluster and \(\kappa\) is chosen
appropriately, CDVM never removes an entire cluster.  

Set 
\[
  \kappa = \kappa_\tau := \min_{k} \tau_k.
\]
Then for any cluster $C_k$ and any $s_k \ge 1$,
\[
  \min\{\tau_k s_k, \kappa_\tau\}
  =
  \kappa_\tau,
\]
so that the objective simplifies to
\[
  \sum_{j=1}^m \min\{\kappa_\tau, v_j\}
  =
  \sum_{k=1}^K m_k \kappa_\tau \,\mathbf{1}\{s_k \ge 1\}.
\]
In this regime, only the question \emph{whether} a cluster is represented
matters; keeping more than one point per cluster brings no additional gain.

\begin{lemma}[CDVM preserves clusters when $S \ge K$]
\label{lem:cdvm_preserves_clusters}
Assume $\kappa = \kappa_\tau$ and $S \ge K$. Then for any optimal
solution $w^\star$ of the CDVM surrogate above,
\[
  \sum_{i \in C_k} w_i^\star \;\ge\; 1,
  \qquad k = 1,\dots,K.
\]
That is, as long as the budget $S$ allows keeping at least one point
per cluster, CDVM never removes an entire cluster.
\end{lemma}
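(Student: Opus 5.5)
The proof is an exchange argument built on the simplified objective derived just before the lemma. With $\kappa=\kappa_\tau$, the surrogate objective equals
\[
  F(w)\;=\;\kappa_\tau\sum_{k=1}^K m_k\,\mathbf{1}\{s_k\ge 1\},\qquad s_k=\sum_{i\in C_k}w_i .
\]
So $F$ depends on $w$ only through which clusters are represented. I will argue by contradiction: let $w^\star$ be optimal, and suppose some cluster $C_{k_0}$ has $s_{k_0}=0$.

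\textbf{Step 1 (pigeonhole).} The budget constraint gives $\sum_{k=1}^K s_k = S \ge K$. Since $s_{k_0}=0$, this total of at least $K$ selected points lies in at most $K-1$ clusters. Hence some cluster $C_\ell$ with $\ell\neq k_0$ has $s_\ell\ge 2$.

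\textbf{Step 2 (exchange).} Build $w'$ from $w^\star$ by switching off one selected point $i\in C_\ell$ and switching on one point $i'\in C_{k_0}$. Such an $i'$ exists because $n_{k_0}\ge 1$, and it is unselected because $s_{k_0}=0$. The vector $w'$ is still binary with $\sum_i w'_i=S$, so it is feasible. Its cluster counts are $s'_\ell=s_\ell-1\ge 1$, $s'_{k_0}=1$, and all other counts are unchanged. Every cluster represented under $w^\star$ therefore stays represented, and $C_{k_0}$ becomes newly represented. This gives
\[
  F(w')-F(w^\star)=\kappa_\tau\,m_{k_0}.
\]

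\textbf{Step 3 (strictness).} The difference $\kappa_\tau m_{k_0}$ is strictly positive whenever $m_{k_0}\ge 1$, because $\kappa_\tau=\min_k\tau_k>0$. In that case $w'$ strictly improves on $w^\star$, contradicting optimality. So every cluster satisfies $s_k\ge 1$, which is the claim.

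The main obstacle is exactly this strictness. If some test cluster were empty ($m_{k_0}=0$), the exchange would leave the objective unchanged. Then there could exist optimal solutions that drop $C_{k_0}$, and the statement for \emph{every} optimal $w^\star$ would fail. I will therefore make explicit the assumption, implicit in the cluster model, that each $\mathcal{T}_k$ is nonempty, i.e.\ $m_k\ge 1$ for all $k$, together with $n_k\ge 1$. Under these assumptions the exchange argument is complete. If $m_k\ge 1$ cannot be assumed, only the weaker statement holds: \emph{some} optimal solution covers all clusters, obtained by applying the exchange repeatedly without loss.
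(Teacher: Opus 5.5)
Your proposal is correct and is essentially the paper's own exchange argument: move one selected point from a cluster with $s_b\ge 2$ into an empty cluster $C_a$, which keeps $C_b$ represented and raises the objective by $m_a\kappa_\tau>0$, contradicting optimality. Your version is slightly more careful than the paper's in two places: you derive the existence of such a $C_b$ by pigeonhole from $\sum_k s_k = S\ge K$, whereas the paper only appeals loosely to the feasibility of covering all clusters, and you state explicitly the assumption $m_k\ge 1$ that the paper uses tacitly when it asserts $m_a\kappa_\tau>0$.
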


\begin{proof}[Proof sketch]
Suppose there exists an optimal solution $w^\star$ and a cluster $C_a$
with $s_a^\star = 0$ (no selected point) while some other cluster $C_b$
has $s_b^\star \ge 2$. 
Move one selected point from $C_b$ to $C_a$, obtaining a new selection
$\widetilde{w}$ with $\sum_i \widetilde{w}_i = S$, $s_a(\widetilde{w}) = 1$,
and $s_b(\widetilde{w}) \ge 1$. Under $\kappa_\tau$,
\[
  m_b \kappa_\tau \,\mathbf{1}\{s_b^\star \ge 1\}
  =
  m_b \kappa_\tau \,\mathbf{1}\{s_b(\widetilde{w}) \ge 1\},
\]
so the contribution from $C_b$ is unchanged, while the contribution from
$C_a$ increases from $0$ to $m_a \kappa_\tau > 0$. All other clusters
are unaffected. Hence the objective strictly increases, contradicting the
optimality of $w^\star$. Therefore no optimal solution can leave a cluster
empty while another cluster contains more than one selected point. For
$S \ge K$, it is feasible to assign at least one point to each cluster,
so every optimal solution must represent all clusters.
\end{proof}

\newpage
\section{Synthetic Dataset}
\label{appendix:synthetic_data}
In this section, we provide further details and empirical examples on the synthetic dataset and Shapley-value data valuation, illustrating how interactions among data points can induce non-monotonic pruning behavior, for example, we construct a dataset in which removing more examples paradoxically improves accuracy, so that keeping less data can outperform keeping more.

\begin{figure*}[!ht]
    \centering
    \textbf{Full dataset}
    
    \includegraphics[width=.8\linewidth]{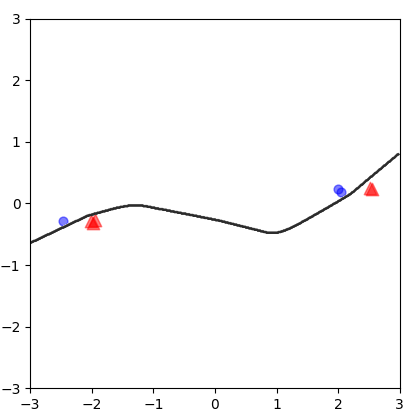}
    
\caption{Synthetic clustered dataset. The training set consists of eight
points from four Gaussian clusters
with centers \(\mu_1 = (-2, 0.5), \mu_2 = (2.5, 0)\) (red) and
\(\mu_3 = (-2.5, -0.5), \mu_4 = (2, 0)\) (blue), corresponding to clusters
\(C_1,\dots,C_4\) with sizes \(n_1 = 3\), \(n_2 = 2\), \(n_3 = 2\), and
\(n_4 = 1\). }
    \label{fig:app_syn_data_full}
\end{figure*}

\newpage
\subsection{Removing Entire Clusters}
Figure~\ref{fig:app_syn_entire_cluster} shows the decision boundary after removing each of the four clusters in turn. In every case the boundary shifts substantially, confirming that each cluster is necessary to correctly classify its associated test region.
\begin{figure*}[!ht]
    \centering
        \includegraphics[width=.49\linewidth]{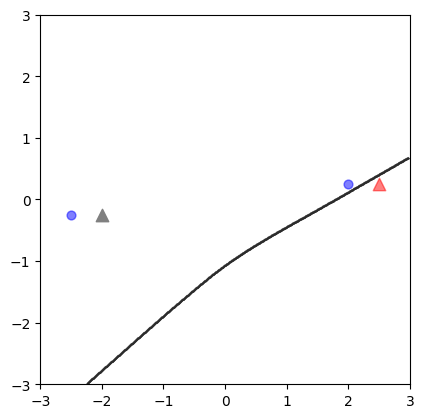}
        \includegraphics[width=.49\linewidth]{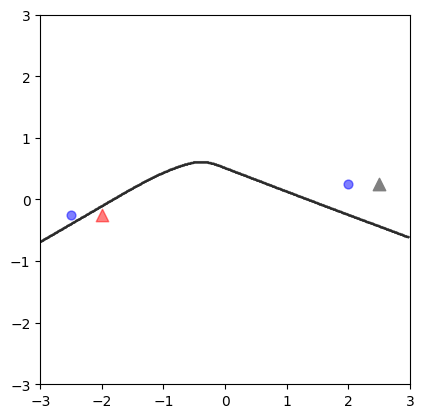}

        \includegraphics[width=.49\linewidth]{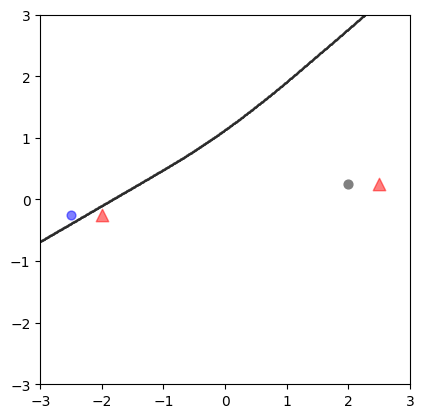}
        \includegraphics[width=.49\linewidth]{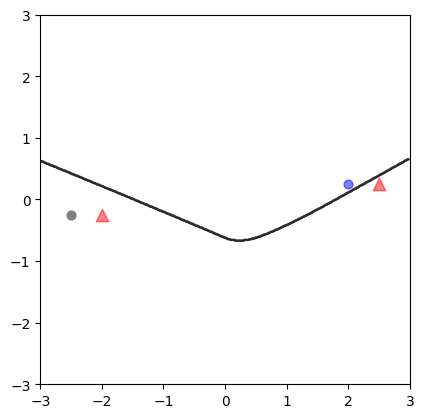}
        
    \caption{Effect of removing an entire cluster. The removed cluster is grey. }
    \label{fig:app_syn_entire_cluster}
\end{figure*}

\newpage
\subsection{Leave One Out}
\label{app:syn_loo}
Figure~\ref{fig:appendix_syn_loo} applies the leave-one-out procedure to each training point. Because all clusters except $C_4$ contain more than one point, removing any single member leaves the decision boundary unchanged and yields a zero LOO value. Only the singleton in $C_4$ produces a non-zero contribution.
\begin{figure*}[!ht]
    \textbf{LOO left red cluster}
    
    \includegraphics[width=.3\linewidth]{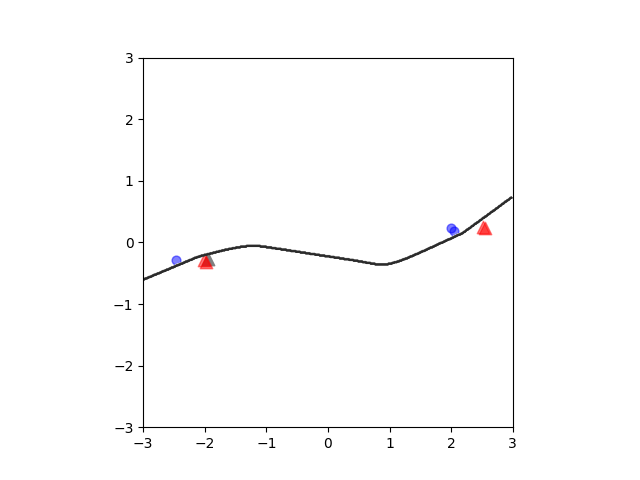}
    \includegraphics[width=.3\linewidth]{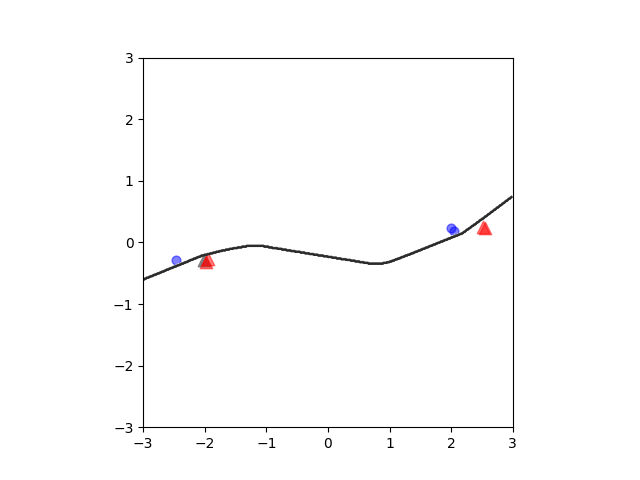}
    \includegraphics[width=.3\linewidth]{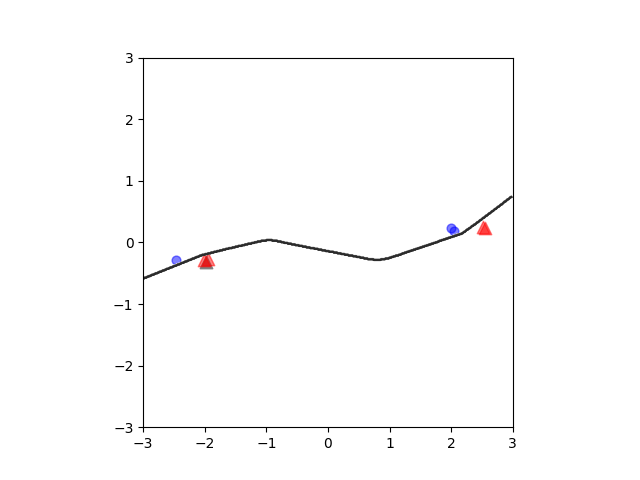}

    \textbf{LOO right red cluster}
    
    \includegraphics[width=.3\linewidth]{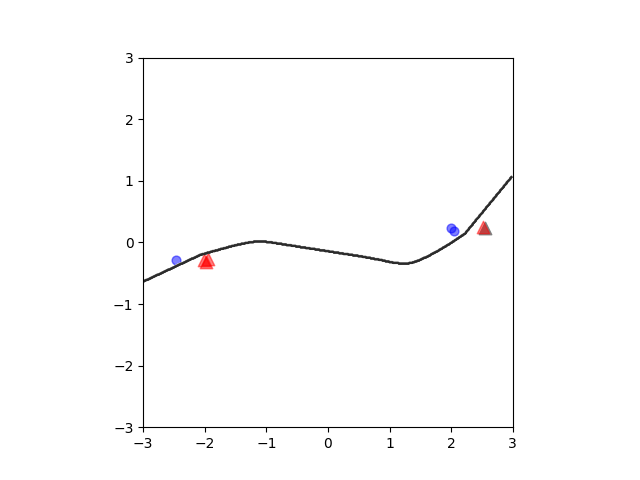}
    \includegraphics[width=.3\linewidth]{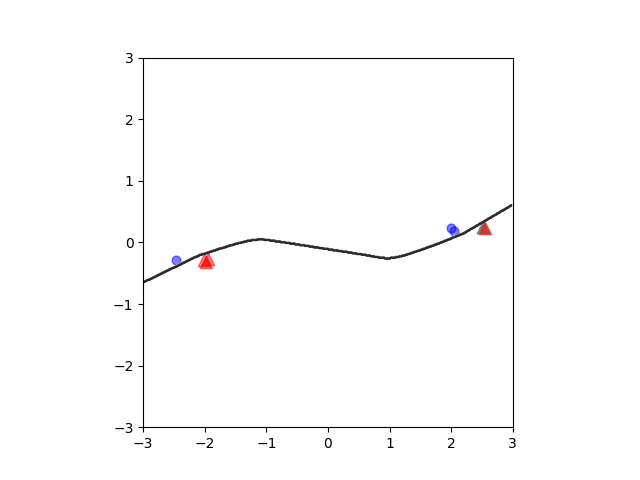}

    \textbf{LOO right blue cluster}
    
    \includegraphics[width=.3\linewidth]{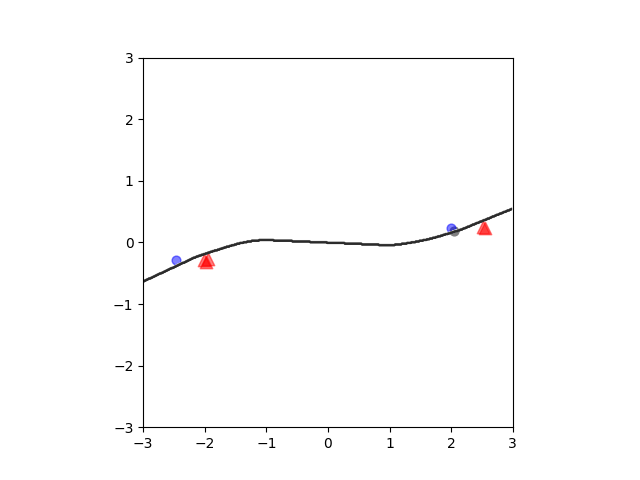}
    \includegraphics[width=.3\linewidth]{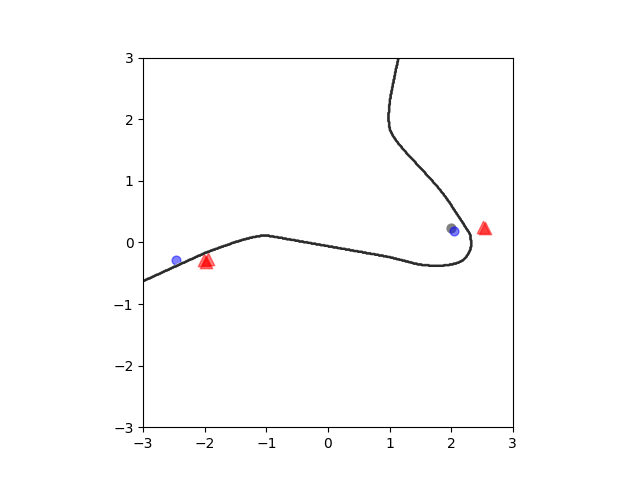}

    \textbf{LOO left blue cluster}
    
    \includegraphics[width=.3\linewidth]{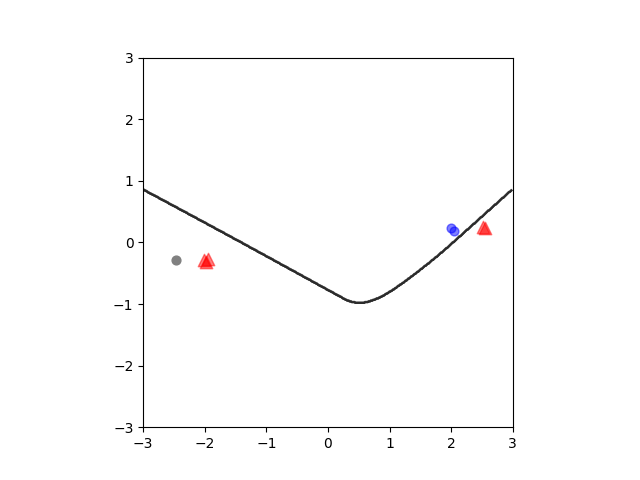}

    \caption{Leave-one-out (LOO) on the dataset from Figure \ref{fig:app_syn_data_full}. All clusters except the last contain more than one point; therefore, the decision boundary remains unchanged when a point is removed from these clusters. Each plot shows the effect of removing exactly one point from the respective cluster. 
    Consequently, only the point from the left blue cluster will exhibit a non-zero leave-one-out data value.}
    \label{fig:appendix_syn_loo}
\end{figure*}

\newpage
\subsection{Shapley Data Value}
\label{app:shap_sy_data}
Figure~\ref{fig:syn_shap_dv} shows the computed Shapley values for the synthetic dataset. Consistent with the cluster-size bias analysis in Appendix~\ref{sec:app_bias_shapley}, points in larger clusters receive smaller values, and the singleton in $C_4$ attains the highest value of all.
\begin{figure*}[!ht]
    \centering
    \includegraphics[width=.61\linewidth]{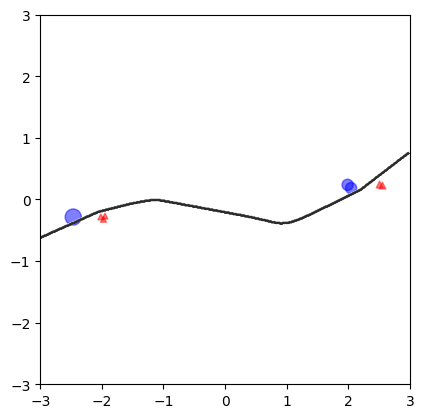}

    \caption{Shapley data valuation scores for the synthetic dataset. The black line represents the decision boundary of an MLP trained on this dataset. Given the 256 possible combinations for subsets, not all are plotted. Instead, the plot displays the computed data Shapley values, where the size of a point indicates its value. As observed, the values are proportional to the cluster size, with the blue singleton point exhibiting the highest value. }
    \label{fig:syn_shap_dv}
    
\end{figure*}

%% file: results_table.tex
\begin{table*}[h!]
    \centering
    \begin{tabular}{l|c c c|c c c}
        \hline 
        & 
            \textbf{30\% Data} &
            \textbf{25\% Data} &
            \textbf{20\% Data} &
            \textbf{15\% Data} &
            \textbf{10\% Data} &
            \textbf{5\% Data} \\
            
        \hline 
        \hline 
		\multicolumn{7}{c}{\textit{nomao}}\\
		\hline
			 CDVM5k & 
				 0.855 $\pm$ 0.02 & 
				 0.836 $\pm$ 0.02 & 
				 0.839 $\pm$ 0.02 & 
				 0.837 $\pm$ 0.02 & 
				 0.830 $\pm$ 0.02 & 
				 0.833 $\pm$ 0.03  
			 \\ 

			 CDVM10k & 
				 0.848 $\pm$ 0.01 & 
				 0.840 $\pm$ 0.02 & 
				 0.833 $\pm$ 0.02 & 
				 0.839 $\pm$ 0.02 & 
				 0.830 $\pm$ 0.02 & 
				 0.821 $\pm$ 0.02  
			 \\ 

			 CDVM-n & 
				 0.872 $\pm$ 0.02 & 
				 0.855 $\pm$ 0.02 & 
				 0.848 $\pm$ 0.02 & 
				 0.849 $\pm$ 0.02 & 
				 0.842 $\pm$ 0.02 & 
				 \textbf{0.842} $\pm$ 0.02  
			 \\ 

			 InfOpt & 
				 \textbf{0.874} $\pm$ 0.00 & 
				 \textbf{0.866} $\pm$ 0.00 & 
				 \textbf{0.862} $\pm$ 0.00 & 
				 \textbf{0.860} $\pm$ 0.00 & 
				 \textbf{0.858} $\pm$ 0.00 & 
				 0.808 $\pm$ 0.00  
			 \\ 

			 DataOOB & 
				 0.839 $\pm$ 0.01 & 
				 0.804 $\pm$ 0.01 & 
				 0.801 $\pm$ 0.01 & 
				 0.794 $\pm$ 0.01 & 
				 0.791 $\pm$ 0.01 & 
				 0.786 $\pm$ 0.02  
			 \\ 

			 Banzhaf & 
				 0.791 $\pm$ 0.02 & 
				 0.764 $\pm$ 0.02 & 
				 0.754 $\pm$ 0.02 & 
				 0.734 $\pm$ 0.03 & 
				 0.710 $\pm$ 0.03 & 
				 0.664 $\pm$ 0.07  
			 \\ 

			 Random & 
				 0.865 $\pm$ 0.02 & 
				 0.844 $\pm$ 0.02 & 
				 0.839 $\pm$ 0.02 & 
				 0.830 $\pm$ 0.03 & 
				 0.812 $\pm$ 0.03 & 
				 0.789 $\pm$ 0.04  
			 \\ 

		\hline
		\multicolumn{7}{c}{\textit{cifar10}}\\
		\hline
			 CDVM5k & 
				 0.598 $\pm$ 0.02 & 
				 0.578 $\pm$ 0.02 & 
				 0.565 $\pm$ 0.03 & 
				 0.551 $\pm$ 0.02 & 
				 0.525 $\pm$ 0.03 & 
				 0.431 $\pm$ 0.03  
			 \\ 

			 CDVM10k & 
				 \textbf{0.605} $\pm$ 0.02 & 
				 \textbf{0.583} $\pm$ 0.03 & 
				 \textbf{0.573} $\pm$ 0.03 & 
				 \textbf{0.567} $\pm$ 0.02 & 
				 \textbf{0.543} $\pm$ 0.02 & 
				 \textbf{0.463} $\pm$ 0.03  
			 \\ 

			 InfOpt & 
				 0.579 $\pm$ 0.02 & 
				 0.580 $\pm$ 0.00 & 
				 0.560 $\pm$ 0.00 & 
				 0.548 $\pm$ 0.00 & 
				 0.530 $\pm$ 0.00 & 
				 0.456 $\pm$ 0.00  
			 \\ 

			 DataOOB & 
				 0.570 $\pm$ 0.01 & 
				 0.495 $\pm$ 0.01 & 
				 0.490 $\pm$ 0.02 & 
				 0.458 $\pm$ 0.03 & 
				 0.413 $\pm$ 0.10 & 
				 0.322 $\pm$ 0.13  
			 \\ 

			 Banzhaf & 
				 0.592 $\pm$ 0.02 & 
				 0.570 $\pm$ 0.03 & 
				 0.522 $\pm$ 0.08 & 
				 0.494 $\pm$ 0.08 & 
				 0.438 $\pm$ 0.08 & 
				 0.332 $\pm$ 0.06  
			 \\ 

			 Random & 
				 0.573 $\pm$ 0.02 & 
				 0.544 $\pm$ 0.03 & 
				 0.520 $\pm$ 0.03 & 
				 0.479 $\pm$ 0.06 & 
				 0.420 $\pm$ 0.07 & 
				 0.315 $\pm$ 0.07  
			 \\ 

		\hline
		\multicolumn{7}{c}{\textit{pol}}\\
		\hline
			 CDVM5k & 
				 0.786 $\pm$ 0.02 & 
				 0.755 $\pm$ 0.03 & 
				 \textbf{0.751} $\pm$ 0.03 & 
				 0.752 $\pm$ 0.03 & 
				 \textbf{0.757} $\pm$ 0.03 & 
				 0.752 $\pm$ 0.04  
			 \\ 

			 CDVM10k & 
				 \textbf{0.796} $\pm$ 0.02 & 
				 \textbf{0.760} $\pm$ 0.03 & 
				 0.751 $\pm$ 0.03 & 
				 \textbf{0.752} $\pm$ 0.03 & 
				 0.749 $\pm$ 0.03 & 
				 \textbf{0.753} $\pm$ 0.03  
			 \\ 

			 InfOpt & 
				 0.699 $\pm$ 0.01 & 
				 0.608 $\pm$ 0.00 & 
				 0.604 $\pm$ 0.00 & 
				 0.572 $\pm$ 0.00 & 
				 0.472 $\pm$ 0.00 & 
				 0.350 $\pm$ 0.00  
			 \\ 

			 DataOOB & 
				 0.738 $\pm$ 0.03 & 
				 0.729 $\pm$ 0.04 & 
				 0.732 $\pm$ 0.03 & 
				 0.727 $\pm$ 0.03 & 
				 0.725 $\pm$ 0.04 & 
				 0.720 $\pm$ 0.04  
			 \\ 

			 Banzhaf & 
				 0.716 $\pm$ 0.04 & 
				 0.689 $\pm$ 0.04 & 
				 0.663 $\pm$ 0.05 & 
				 0.636 $\pm$ 0.05 & 
				 0.607 $\pm$ 0.04 & 
				 0.557 $\pm$ 0.06  
			 \\ 

			 Random & 
				 0.759 $\pm$ 0.03 & 
				 0.731 $\pm$ 0.03 & 
				 0.727 $\pm$ 0.03 & 
				 0.721 $\pm$ 0.04 & 
				 0.710 $\pm$ 0.04 & 
				 0.668 $\pm$ 0.05  
			 \\ 

		\hline
		\multicolumn{7}{c}{\textit{imdb}}\\
		\hline
			 CDVM5k & 
				 0.804 $\pm$ 0.01 & 
				 0.795 $\pm$ 0.02 & 
				 0.789 $\pm$ 0.02 & 
				 0.787 $\pm$ 0.02 & 
				 0.787 $\pm$ 0.02 & 
				 0.782 $\pm$ 0.02  
			 \\ 

			 CDVM10k & 
				 \textbf{0.813} $\pm$ 0.01 & 
				 0.800 $\pm$ 0.02 & 
				 \textbf{0.794} $\pm$ 0.03 & 
				 \textbf{0.794} $\pm$ 0.01 & 
				 \textbf{0.793} $\pm$ 0.01 & 
				 \textbf{0.783} $\pm$ 0.02  
			 \\ 

			 InfOpt & 
				 0.794 $\pm$ 0.01 & 
				 \textbf{0.808} $\pm$ 0.00 & 
				 0.774 $\pm$ 0.00 & 
				 0.774 $\pm$ 0.00 & 
				 0.748 $\pm$ 0.00 & 
				 0.680 $\pm$ 0.00  
			 \\ 

			 DataOOB & 
				 0.794 $\pm$ 0.01 & 
				 0.788 $\pm$ 0.02 & 
				 0.782 $\pm$ 0.02 & 
				 0.782 $\pm$ 0.02 & 
				 0.789 $\pm$ 0.01 & 
				 0.780 $\pm$ 0.01  
			 \\ 

			 Banzhaf & 
				 0.772 $\pm$ 0.03 & 
				 0.758 $\pm$ 0.03 & 
				 0.748 $\pm$ 0.03 & 
				 0.739 $\pm$ 0.02 & 
				 0.721 $\pm$ 0.04 & 
				 0.678 $\pm$ 0.04  
			 \\ 

			 Random & 
				 0.782 $\pm$ 0.02 & 
				 0.772 $\pm$ 0.02 & 
				 0.765 $\pm$ 0.02 & 
				 0.751 $\pm$ 0.03 & 
				 0.729 $\pm$ 0.03 & 
				 0.675 $\pm$ 0.05  
			 \\ 

		\hline
		\multicolumn{7}{c}{\textit{adult}}\\
		\hline
			 CDVM5k & 
				 \textbf{0.735} $\pm$ 0.01 & 
				 \textbf{0.714} $\pm$ 0.01 & 
				 \textbf{0.714} $\pm$ 0.02 & 
				 \textbf{0.708} $\pm$ 0.01 & 
				 0.703 $\pm$ 0.02 & 
				 \textbf{0.702} $\pm$ 0.01  
			 \\ 

			 CDVM10k & 
				 0.726 $\pm$ 0.02 & 
				 0.709 $\pm$ 0.01 & 
				 0.707 $\pm$ 0.01 & 
				 0.706 $\pm$ 0.01 & 
				 \textbf{0.705} $\pm$ 0.02 & 
				 0.694 $\pm$ 0.01  
			 \\ 

			 InfOpt & 
				 0.717 $\pm$ 0.01 & 
				 0.664 $\pm$ 0.00 & 
				 0.652 $\pm$ 0.00 & 
				 0.638 $\pm$ 0.00 & 
				 0.648 $\pm$ 0.00 & 
				 0.678 $\pm$ 0.00  
			 \\ 

			 DataOOB & 
				 0.715 $\pm$ 0.01 & 
				 0.704 $\pm$ 0.01 & 
				 0.704 $\pm$ 0.01 & 
				 0.698 $\pm$ 0.01 & 
				 0.689 $\pm$ 0.01 & 
				 0.687 $\pm$ 0.01  
			 \\ 

			 Banzhaf & 
				 0.706 $\pm$ 0.02 & 
				 0.678 $\pm$ 0.02 & 
				 0.659 $\pm$ 0.02 & 
				 0.647 $\pm$ 0.03 & 
				 0.628 $\pm$ 0.03 & 
				 0.591 $\pm$ 0.04  
			 \\ 

			 Random & 
				 0.734 $\pm$ 0.02 & 
				 0.698 $\pm$ 0.02 & 
				 0.693 $\pm$ 0.02 & 
				 0.685 $\pm$ 0.02 & 
				 0.673 $\pm$ 0.02 & 
				 0.655 $\pm$ 0.03  
			 \\ 

		\hline
		\multicolumn{7}{c}{\textit{bbc}}\\
		\hline
			 CDVM5k & 
				 0.950 $\pm$ 0.01 & 
				 0.946 $\pm$ 0.01 & 
				 \textbf{0.949} $\pm$ 0.00 & 
				 0.946 $\pm$ 0.01 & 
				 0.941 $\pm$ 0.01 & 
				 0.933 $\pm$ 0.01  
			 \\ 

			 CDVM10k & 
				 0.946 $\pm$ 0.01 & 
				 0.947 $\pm$ 0.01 & 
				 0.947 $\pm$ 0.01 & 
				 \textbf{0.947} $\pm$ 0.01 & 
				 \textbf{0.943} $\pm$ 0.01 & 
				 \textbf{0.934} $\pm$ 0.01  
			 \\ 

			 InfOpt & 
				 \textbf{0.953} $\pm$ 0.01 & 
				 \textbf{0.950} $\pm$ 0.00 & 
				 0.944 $\pm$ 0.00 & 
				 0.934 $\pm$ 0.00 & 
				 0.938 $\pm$ 0.00 & 
				 0.836 $\pm$ 0.00  
			 \\ 

			 DataOOB & 
				 0.944 $\pm$ 0.00 & 
				 0.945 $\pm$ 0.00 & 
				 0.940 $\pm$ 0.00 & 
				 0.938 $\pm$ 0.00 & 
				 0.921 $\pm$ 0.01 & 
				 0.912 $\pm$ 0.01  
			 \\ 

			 Banzhaf & 
				 0.948 $\pm$ 0.01 & 
				 0.945 $\pm$ 0.01 & 
				 0.944 $\pm$ 0.01 & 
				 0.936 $\pm$ 0.01 & 
				 0.905 $\pm$ 0.05 & 
				 0.801 $\pm$ 0.16  
			 \\ 

			 Random & 
				 0.943 $\pm$ 0.01 & 
				 0.940 $\pm$ 0.01 & 
				 0.934 $\pm$ 0.01 & 
				 0.926 $\pm$ 0.02 & 
				 0.901 $\pm$ 0.05 & 
				 0.849 $\pm$ 0.06  
			 \\ 

		\hline

    \end{tabular}
    \caption{Accuracy on 30\%, 25\%, 20\%, 15\%, 10\%, and 5\% of training data for six datasets in the OpenDataVal benchmark \cite{Jiang2023}. Out of 36 configurations, \cdvm achieved state-of-the-art performance in 28 setups. The error margins represent standard deviations based on 25 experiments.}
    \label{tab:results}
\end{table*}